\definecolor{myblue}{rgb}{.106,.2,.322}
\definecolor{myorange}{rgb}{.984,.392,.016}
\newtheorem{lemma}{Lemma}
\newtheorem{proposition}{Proposition}
\newtheorem{corollary}{Corollary}
\theoremstyle{remark}
\newtheorem*{remark}{Remark}
\newcounter{MYtempeqncnt}
\newcommand{\nn}{\nonumber}
\newcommand{\Z}{\mathbb{Z}}
\newcommand{\player}[1]{$\mathcal{P}_{#1}$}
\newcommand{\indicator}{\mathds{1}}
\newcommand{\R}{\mathbb{R}}
\newcommand{\symbolsSet}{\Sigma^n}
\newcommand{\trueSymbols}{\symbolsSet_{o}}
\newcommand{\decodableSymbols}{\symbolsSet_{d}}
\newcommand{\noisySet}{\Sigma^n}
\newcommand{\trueSymbol}{x_o}
\newcommand{\justSymbol}{x}
\newcommand{\noisySymbol}{y}
\newcommand{\decoderOutput}{\hat{\noisySymbol}}
\newcommand{\factor}[1]{\gamma_{#1}}
\newcommand{\Aactiono}{a_{o}}
\newcommand{\Aactiona}{a_{\justSymbol}}
\newcommand{\Aaction}{a}
\newcommand{\Amixed}{\alpha}
\newcommand{\AactionSpace}{{\cal A}}
\newcommand{\AmixedSpace}{\Delta^{|\AactionSpace|-1}}
\newcommand{\Dmixed}{\pi}
\newcommand{\DmixedSpace}{[0,1]^{d_o+1}}
\newcommand{\channel}{p(\noisySymbol|\justSymbol)}
\newcommand{\game}{{\cal G}}
\newcommand{\loss}{\ell}
\newcommand{\bestResponse}{B}
\newcommand{\errorProb}{p_e}
\newcommand{\reward}{r}
\newcommand{\Hamming}{H}
\newcommand{\allHamming}{h}
\newcommand{\vOnes}{\mathbf{1}}
\newcommand{\vZeros}{\mathbf{0}}
\newcommand{\gameMatrix}{\Xi}
\newcommand{\gameConstant}{\xi}
\newcommand{\cc}{\cellcolor{myorange!20!white}}
\begin{document}

\title{Reliable Smart Road Signs}

\author{Muhammed O. Sayin, Chung-Wei Lin, Eunsuk Kang,\\\hspace{.2in}Shinichi Shiraishi, and Tamer Ba\c{s}ar, \IEEEmembership{Life Fellow, IEEE}
\thanks{This research was partially supported by the U.S. Office of Naval Research (ONR) MURI grant N00014-16-2710.}
\thanks{M. O. Sayin and T. Ba\c{s}ar are with the Department
of Electrical and Computer Engineering, University of Illinois at Urbana-Champain, Urbana,
IL, 61801, USA (e-mail: \{sayin2,basar1\}@illinois.edu).}
\thanks{C.-W. Lin is with National Taiwan University, Taipei, Taiwan (email: cwlin@csie.ntu.edu.tw).}
\thanks{E. Kang is with Carnegie Mellon University, Pittsburgh, PA, 15213 USA (email: eskang@cmu.edu).}
\thanks{S. Shiraishi is with Toyota InfoTechonology Center Co., Ltd., Minato-ku, Tokyo, 107-0052, Japan (e-mail: sshiraishi@jp.toyota-itc.com).}}


\maketitle

\begin{abstract}
In this paper, we propose a game theoretical adversarial intervention detection mechanism for reliable smart road signs. A future trend in intelligent transportation systems is ``smart road signs" that incorporate smart codes (e.g., visible at infrared) on their surface to provide more detailed information to smart vehicles. Such smart codes make road sign classification problem aligned with communication settings more than conventional classification. This enables us to integrate well-established results in communication theory, e.g., error-correction methods, into road sign classification problem. Recently, vision-based road sign classification algorithms have been shown to be vulnerable against (even) small scale adversarial interventions that are imperceptible for humans. On the other hand, smart codes constructed via error-correction methods can lead to robustness against small scale intelligent or random perturbations on them. In the recognition of smart road signs, however, humans are out of the loop since they cannot see or interpret them. Therefore, there is no equivalent concept of imperceptible perturbations in order to achieve a comparable performance with humans. Robustness against small scale perturbations would not be sufficient since the attacker can attack more aggressively without such a constraint. Under a game theoretical solution concept, we seek to ensure certain measure of guarantees against even the worst case (intelligent) attackers that can perturb the signal even at large scale. We provide a randomized detection strategy based on the distance between the decoder output and the received input, i.e., error rate. Finally, we examine the performance of the proposed scheme over various scenarios.
\end{abstract}

\begin{IEEEkeywords}
Game theory; Autonomous driving; Traffic sign recognition; Adversarial classification; Certifiable machine learning.
\end{IEEEkeywords}

\IEEEpeerreviewmaketitle

\section{Introduction}

\IEEEPARstart{M}{achine} learning is one of the  key enabling technologies for autonomous vehicles. An autonomous vehicle can learn how to recognize the surroundings and can base its strategic decisions on the information learnt. It is only a matter of time for autonomous driving to replace of human drivers completely. However, for the time being, there are still important, yet not completely addressed, challenges for autonomous driving. Road-sign classification is one of these challenges. Varying weather conditions, changing lighting throughout the day and occlusion are known to pose challenges to road-sign recognition/classification in real-time \cite{ref:Mogelmose12}. However, recently, it has been shown that there can also be physical adversarial modifications, e.g., stickers or graffiti, on the road signs to mislead the classification algorithms \cite{ref:Eykholt18}. 

\subsection{Prior Literature}

In the field of intelligent transportation systems, there have been extensive effort to mitigate the former challenge \cite{ref:Jin14, ref:Gonzalez14, ref:Greenhalgh15, ref:Yang16, ref:Zeng17, ref:Liu14, ref:Lu17a}. In \cite{ref:Jin14}, the authors have studied convolutional neural networks trained according to hinge loss stochastic gradient descent to achieve fast and stable convergence rates with substantial recognition performance. In \cite{ref:Gonzalez14} and \cite{ref:Greenhalgh15}, the authors have proposed text-based detection systems for traffic panels that could include information that can vary substantially. Computational complexity of the recognition algorithms plays a significant role for real-time applications since autonomous vehicles are time-critical systems \cite{ref:Yang16,ref:Zeng17}. In \cite{ref:Yang16}, the authors have sought to enhance the performance of convolutional neural networks for faster performance in real-time applications through localization of the traffic-signs in the input images based on their types. In \cite{ref:Zeng17}, the authors have proposed kernel-based extreme learning machines with deep perceptual features to achieve comparable performance to hinge-loss stochastic gradient based convolutional neural networks (proposed in \cite{ref:Jin14}) with reduced computational complexity. Tree-based hierarchical structures have also been proposed to achieve coarse-to-fine road sign detection \cite{ref:Liu14,ref:Lu17a}.

Different from the previous works \cite{ref:Jin14, ref:Gonzalez14, ref:Liu14, ref:Greenhalgh15, ref:Yang16, ref:Lu17a, ref:Zeng17}, however, in this paper, we seek to address the latter challenge, i.e., road-sign classification in {\em adversarial} environments, where there can be an intelligent attacker modifying road signs physically as exemplified in  \cite{ref:Eykholt18}. Especially for vision-based classification algorithms, it is an important issue that an attacker could craft the input through perturbations that are {\em imperceptible} for humans (i.e., a human would still easily classify the input correctly) while the algorithm classifies the input as the attacker's targeted class \cite{ref:Szegedy14, ref:Goodfellow15, ref:Athalye18a, ref:Athalye18b, ref:Eykholt18}. Such an input sample is called {\em adversarial example} \cite{ref:Szegedy14}. Recently, substantial amount of defense methods have been proposed to make machine learning algorithms robust against adversarial scenarios. These defense methods have been developed to provide robustness against certain classes of attacks, and it has been shown that it is possible to bypass them all via small modification of the attacks \cite{ref:Athalye18a}. 

\begin{figure*}[t!]
\centering
\begin{subfigure}[c]{.165\textwidth}
\includegraphics[width =\textwidth]{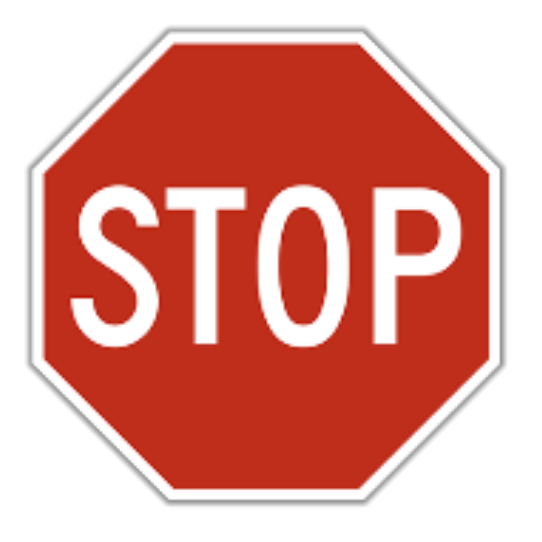}
\caption{Visual Image}
\end{subfigure}~~~~
\begin{subfigure}[c]{.175\textwidth}
\includegraphics[width =\textwidth]{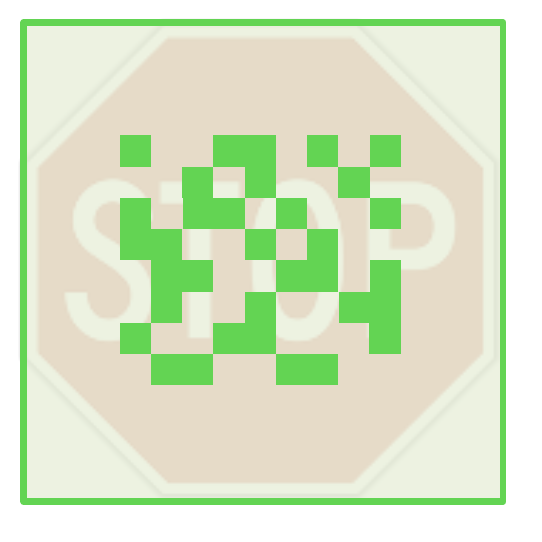}
\caption{Infrared Image of Smart Code}\label{fig:infrared}
\end{subfigure}
\begin{subfigure}[c]{.2\textwidth}
\includegraphics[width = \textwidth]{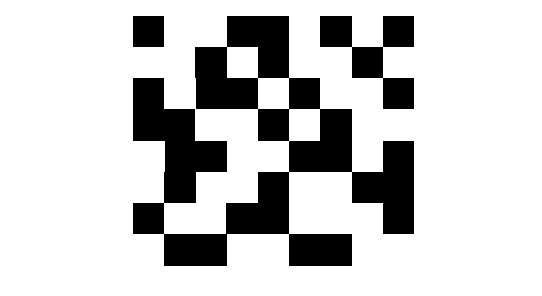}
\caption{Original Codeword}\label{fig:code}
\end{subfigure}
\begin{subfigure}[c]{.175\textwidth}
\includegraphics[width = \textwidth]{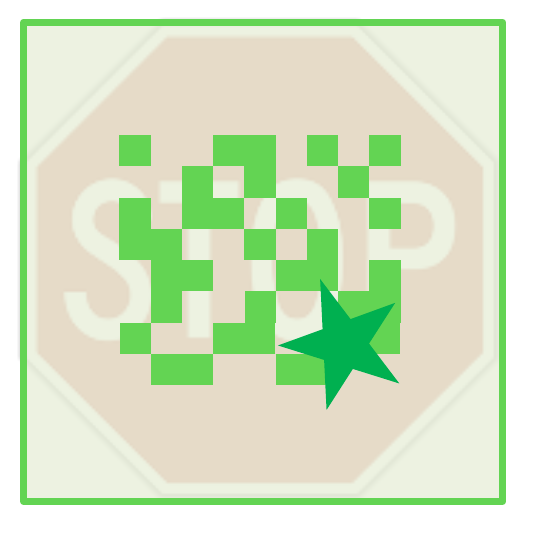}
\caption{Infrared Image of Smart Code Attacked}\label{fig:attack}
\end{subfigure}
\begin{subfigure}[c]{.2\textwidth}
\includegraphics[width = \textwidth]{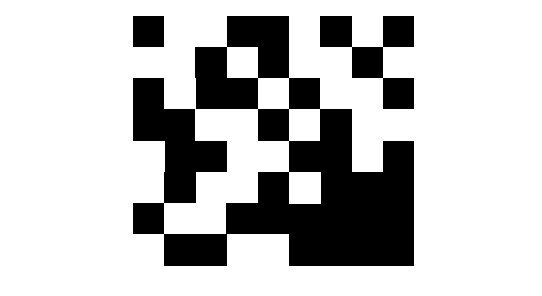}
\caption{Codeword Attacked}\label{fig:attackedCode}
\end{subfigure}
\caption{Examples of a road sign with smart code and an attack (an infrared star-shaped drawing on the right-bottom corner).} \label{fig:stop}
\end{figure*}

\subsection{Smart Road Signs}

Our goal, here, is to achieve reliable identification of road signs by smart vehicles without limiting the solution to learning-based techniques. Particularly, we can view the road-sign classification problem from a wider perspective as a {\em communication} problem. The road sign and smart vehicle can be viewed as a transmitter and a receiver, respectively. Then, the message is the type of the road sign, the signal carrying that message is the physical road sign, and the signal received is its digital image taken by the smart vehicle. The relation in between the signals sent and received, i.e., physical road sign and its digital image, can be modeled via a noisy channel that can lead to errors in the transmitted message. Based on this viewpoint, we can reconfigure this information flow by also designing the signal, i.e., the physical road sign, via the well-established tools developed in communication theory. 

How we can re-design physical road signs and which medium we can choose to transmit information are only limited by our imagination and the corresponding financial burden to adapt the infrastructure. For example, at each road sign, we could have included road side units that can transmit the message via dedicated short range communication (DSRC) radios \cite{ref:IEEE1609-2,ref:Kenney11} although there might also be adversarial interventions on those radio signals \cite{ref:Kenney11,ref:Liang18}. However, the future trend in road signs is to include (e.g., infrared) smart codes that can be read by smart vehicles as illustrated figuratively in Fig. \ref{fig:infrared} \cite{ref:Schwab17,ref:Hyatt18}. Such smart codes provide flexibility in transmiting more information to smart vehicles instead of just the type of road signs. 

\begin{remark}
Due to the regularity of smart codes, e.g., see Fig. \ref{fig:stop}, we can identify them via image processing techniques instead of learning-based techniques. For example, in \cite{ref:Fan12}, the authors use a simularity measure to identify the countdown numbers in a traffic light, instead of traditional number recognition algorithms. Furthermore, quick response (QR) codes constitute another example where we can incorporate image processing techniques to identify the underlying code, e.g., \cite{ref:QR}. 
\end{remark}

We can attain reliable information transmission with certain formal guarantees when we construct the smart codes via error-correction methods. Particularly, in coding theory, error correction methods introduce redundancy to signals, i.e., codewords, in order to recover the underlying message as accurately as possible when there is a noisy channel that can perturb the signal \cite{ref:Blahut02}. Correspondingly, if the amount of perturbation on the smart code, due to some physical modifications as seen in Fig. \ref{fig:attack}, were less than half of the minimum distance between any two codewords, then we could have recovered the underlying codeword without any error. However, this is not the end of the story as explained below.

\subsection{Motivation}

Recall that physical adversarial examples in visual tasks are defined as inputs crafted by an intelligent attacker in order to mislead the classification algorithms while a human can still classify it accurately without any difficulty \cite{ref:Eykholt18,ref:Athalye18b}. This challenge is important to mitigate in order to attain a comparable performance with human drivers. However, such a threat model is not appropriate for smart codes since they will not be visible to humans and even if they were visible, they are not interpretable by humans manually easily. Since humans are out of the loop, there will not be such a constraint limiting the perturbation amount on the attack that we are seeking to defend against. In the literature of communication in adversarial environments \cite{ref:Li15,ref:Zhang16}, it is evident that it will not be sufficient for smart codes to be robust against just small scale perturbations if the adversary is powerful to launch large scale perturbation. Therefore, we introduce a new threat model where the intelligent adversary can also perturb the input at large scale in order to lead to erroneous decoding.  

Given a codeword received, our goal is to detect whether there exists an adversarial intervention or not, e.g., as illustrated in Fig. \ref{fig:attackedCode}. Note that the perceived codeword might differ from the original codeword not only due to adversarial intervention but also due to random perturbations that inevitably appear in the process of interpreting the infrared image of the smart code. Indeed, the presence of such random perturbations is the main reason to incorporate error correction methods while constructing the smart codes. Note also that an intelligent attacker attacks by taking the detection mechanisms into account. Correspondingly, while designing the detection strategy, we need to anticipate the reaction of the attacker. However, one-level depth reasoning where the detector designs the strategy by anticipating the reaction of the attacker would not be effective if the attacker has taken this proactive defense into account and has reacted in a way that can undermine it. In \cite{ref:Athalye18a}, the authors have shown this phenomenon by bypassing the state-of-the-art defense mechanisms through strategic modification of the attack (those mechanisms defend against). To mitigate this issue, we propose to design the detection strategies under the solution concept of game theoretical equilibrium \cite{ref:Basar99}. However, this paper is definitely not the first one approaching the adversarial classification (or intervention detection) problem through a game-theoretical lens. In the following, we review these studies.  

\subsection{Game Theoretical Approaches}

In \cite{ref:Dalvi04}, the authors have introduced a non-zero sum game between an attacker and a classifier; however, they have not studied any notion of equilibrium. In \cite{ref:Bruckner09,ref:Bruckner11}, the authors have studied adversarial prediction problems for a certain class of learners, e.g., support-vector-machines, in terms of Nash and Stackelberg equilibria, respectively. Recently, in \cite{ref:Dritsoula17}, the authors have analyzed adversarial binary-classification as a non-zero sum game between an attacker and a classifier. The classifier seeks to detect whether the input is coming from the attacker or from a known benign-distribution. On the other side, the attacker seeks to maximize his reward (which depends on the input) without being detected by the classifier. The authors have shown that the classifier can restrict the strategies to mixtures of classifiers setting threshold on the reward of the attacker without any loss of generality. However, the results in \cite{ref:Dritsoula17} cannot be extended to our problem setting since the codeword attacked can also be perturbed randomly in the process of reading it. Due to that randomness, different attacks with different rewards can lead to the same codeword received. Therefore, given the received codeword, the defender cannot know the attacker's reward to compare it against such a threshold.

\subsection{Our Contributions}

In this paper, we propose an adversarial intervention detection mechanism for smart road signs in order to ensure reliable recognition by smart vehicles. We model the interaction between the detection mechanism and attackers as a zero-sum Stackelberg game \cite{ref:Basar99} where the detector is the leader. Particularly, attackers can attack road signs by physically modifying their smart codes at large or small scales, as illustrated in Fig. \ref{fig:attack}, while knowing the detection mechanism. The detector seeks to minimize a performance metric that includes cost of losing the opportunity of preventing future attacks by not being able to detect it now, cost of adversary-induced decoding errors or failures, false alarm cost, and easiness of deceptive perturbations. Against the worst-case attacker who seeks to maximize the detector's performance metric, the detector designs a randomized detection rule based on the distance between the received codeword and the decoder output, i.e., error rate. 

The game theoretical solution concept yields that the detector needs to anticipate the attacker's best reaction to the proposed detection policy. However, large size of the attacker's strategy space can lead to computational issue while computing the best detection rule offline. To this end, we examine the attacker's actions and show that the attacker can be viewed as selecting an action from a quotient space of the actual attack space with respect to a certain equivalence relation, which will be described in detail in Subsection \ref{sec:equivalence}. However, that quotient space can still be large to search over if there are many distinct road signs. Accordingly, we provide a method to relax the attack space to address such computational issues in Subsection \ref{sec:relax}. This conservative relaxation where the attacker is viewed to possess more power than in practice enables us to transform the problem into an efficient linear program (LP) with substantially smaller size. In addition to game theoretical results, we also analyze the performance of the proposed detection mechanism numerically over various scenarios. 

Our main contributions are as follows:
\begin{itemize}
\item To the best of our knowledge, this is the first work in the literature to address adversarial intervention on smart road signs within a game theoretical framework. 
\item The randomized detection rule developed under the solution concept of game theoretical equilibrium ensures robustness against the worst-case attacker that attacks to maximize the cost for the detector while knowing the detection mechanism.
\item By relaxing the attacker's strategy space, we provide an efficient (offline) LP-based algorithm to compute the best randomized detection strategy, which can reduce the verification complexity. 
\end{itemize}

The paper is organized as follows: In Section \ref{sec:prelim}, we provide preliminary information about error-correction coding. In Section \ref{sec:problem}, we formulate the problem. In Section \ref{sec:game}, we analyze the equilibrium of the game. We provide numerical examples in Section \ref{sec:examples}. We conclude the paper and identify possible research directions in Section \ref{sec:conclusion}. 

\section*{Nomenclature}
\addcontentsline{toc}{section}{Nomenclature}
\noindent
{\em Problem Setting:}
\begin{IEEEdescription}[\IEEEusemathlabelsep\IEEEsetlabelwidth{$\loss:\trueSymbols\times\symbolsSet\rightarrow \{0,1\}$}]
\item[${[n,k,d]}_q$] linear block code
\item[$n\in\Z$] codeword length
\item[$k\in\Z$] message length
\item[$d\in\Z$] (minimum) distance of the code
\item[$d_o = \lfloor (d-1)/2\rfloor$] error-correction diameter
\item[$\Sigma$] alphabet of the symbols
\item[$q=|\Sigma|$] alphabet size
\item[$\symbolsSet$] set of all codewords
\item[$\trueSymbols\subset\symbolsSet$] set of encoded codewords
\item[$\decodableSymbols\subset\symbolsSet$] set of decodable codewords
\item[$\trueSymbol\in\trueSymbols$] encoded codeword
\item[$\noisySymbol\in\noisySet$] received (noisy) codeword
\item[$\decoderOutput\in\trueSymbols$] decoder output for decodable $\noisySymbol$
\item[$\channel$] noisy channel
\item[$p(\trueSymbol),\trueSymbol\in\trueSymbols$] probability of $\trueSymbol\in\trueSymbols$
\item[$\errorProb\in{[0,1]}$] probability of error in a symbol
\item[$\Hamming:\symbolsSet\times\symbolsSet\rightarrow \Z$] Hamming distance 
\end{IEEEdescription}
{\em Game Setting:}
\begin{IEEEdescription}[\IEEEusemathlabelsep\IEEEsetlabelwidth{$\loss:\trueSymbols\times\symbolsSet\rightarrow \{0,1\}$}]
\item[$\game$] road-sign classification game
\item[\player{A}] Attacker
\item[\player{D}] Detector
\item[$U(\Amixed,\Dmixed)$] \player{D}'s cost function
\item[$\Dmixed\in\DmixedSpace$] \player{D}'s randomized detection rule
\item[$\Aactiono\in\trueSymbols$] attacked codeword
\item[$\Aactiona\in\symbolsSet$] crafted codeword
\item[$\Aaction=(\Aactiono,\Aactiona)$] \player{A}'s (pure) action
\item[$\AactionSpace=\trueSymbols\times\symbolsSet$] \player{A}'s action space
\item[$\Amixed = \{\Amixed_{\Aaction}\} \in \AmixedSpace$] \player{A}'s mixed strategy
\item[$\loss:\trueSymbols\times\symbolsSet\rightarrow \{0,1\}$] loss due to decoding error/failure
\item[$\factor{j}\geq 0, j=1,\ldots,4$] multiplicative factors
\end{IEEEdescription}

\begin{figure*}
\centering
\begin{overpic}[width = \textwidth]{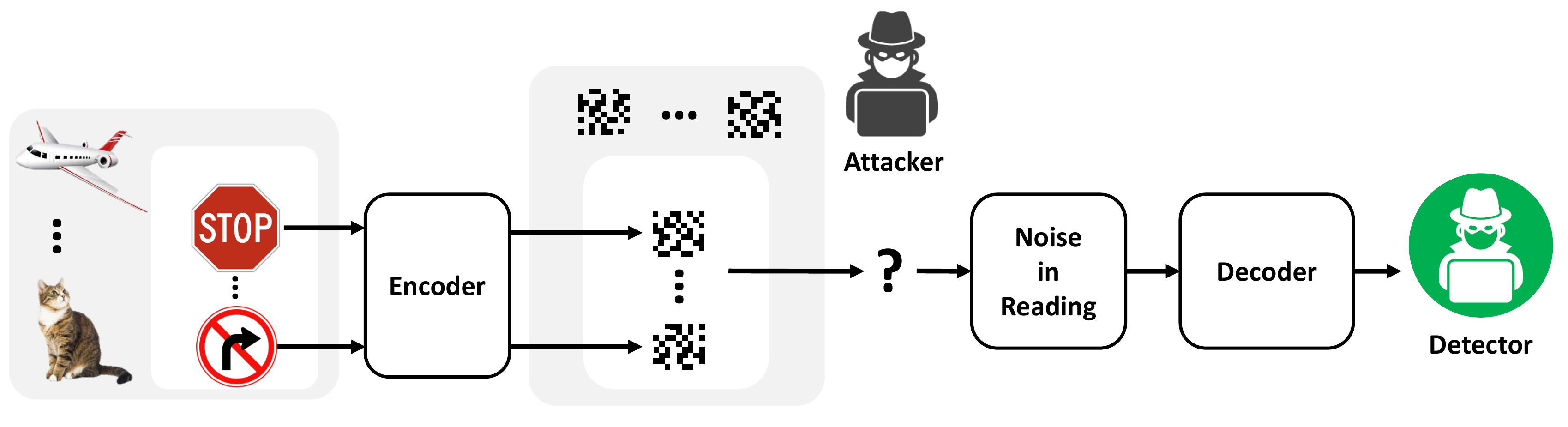}
\put(181,125){Codeword Space $\symbolsSet$}
\put(54,82){Road Signs}
\put(196,78){Smart Codes}
\put(282,63){\Huge \ding{45}}
\end{overpic}
\caption{A figurative illustration of the interaction in-between \player{A} and \player{D}. \player{A} selects which road sign to attack and attacks its codeword by changing the symbols physically as exemplified in Fig. \ref{fig:attack}. \player{D} observes a noisy version of the codeword and seeks to detect whether there has been an attack or not.} 
\label{fig:model}
\end{figure*}

\section{Preliminaries in Error Correction Coding}\label{sec:prelim}

Error correction codes provide certain formal guarantees for the transmission of digital data over noisy channels as long as the deviation on the message sent due to random/intelligent noise is less than a certain threshold with respect to a certain distance metric \cite{ref:Blahut02}.  Since a smart code can be viewed as a finite-size block and perturbations can be viewed as Boolean operations, e.g., flipped bits, we specifically consider linear block codes, which encode data in blocks. They are called linear because any linear combination of codewords is also a codeword. Formally, a linear block code, denoted by $[n,k,d]_q$, operates over a finite alphabet of symbols whose size is denoted by $q\in\Z$, and maps $k\in\Z$ symbols to $n\in\Z$ symbols. The (minimum) distance of a block, denoted by $d\in\Z$, is the minimum number of positions where any two distinct codewords differ, i.e., the Hamming distance \cite{ref:Blahut02} in-between the distinct codewords. The abstraction of the code via $[n,k,d]_q$ enables us to study all the linear block codes in a unified way. 

We emphasize that the Singleton bound \cite{ref:Singleton64, ref:Roman92} that all linear block codes satisfy is given by 
\begin{equation}\label{eq:Singleton}
d\leq n-k+1,
\end{equation}
where the equality holds for Reed-Solomon codes \cite{ref:Reed60}. Furthermore, the minimum distance $d$ implies that the block code can detect $d-1$ symbol errors and correct up to 
\begin{equation}\label{eq:do}
d_o := \left\lfloor \frac{d-1}{2} \right\rfloor
\end{equation}
symbol errors since there exists no other codeword within $d-1$ diameter of each codeword. 

Consider that the number of symbol errors, denoted by $e\in\Z$, is more than half of the minimum distance, i.e., $e > d_o$. We say that a {\em decoding error} exists if the Hamming distance between the received codeword and any other codeword is less than or equal to $d_o$, i.e., if we decode it erroneously. Further, we say that a {\em decoding failure} exists if the Hamming distance between the received codeword and all the other codewords is more than $d_o$, i.e., if the received codeword is not decodable \cite{ref:daraiseh98}.

\section{Problem Formulation}\label{sec:problem}

Consider that road signs are encoded into smart codes via a linear block code $[n,k,d]_q$, and there exist two players: an attacker (\player{A}) and a detector (\player{D}), as seen in Fig. \ref{fig:model}. Given the encoding-decoding scheme and the underlying statistical profiles, \player{D} seeks to detect any intervention by \player{A} while \player{A} seeks to modify the smart codes physically, as exemplified in Fig. \ref{fig:attack}, in order to lead to decoding error/failure stealthily. 

{\em Noise Model.} The decoder reads a noisy version of the smart code due to, e.g., lighting-induced blurring or harsh weather conditions. Let $\symbolsSet$ denote the codeword space. Then, we model this noise via a probability transition mapping $p(\noisySymbol | \justSymbol)$ corresponding to the probability of receiving codeword $\noisySymbol\in\noisySet$ given that the transmitted codeword is $\justSymbol\in\symbolsSet$. We suppose that all the symbol errors by nature are equally likely and independent of each other. We denote the probability that there can be an error in a symbol by $\errorProb\in[0,1]$. We also suppose that the change of the symbol to any other symbol in the alphabet is equally likely in a symbol error. 

\begin{remark}[Symbol Error]
In a codeword, a symbol consists of multiple contiguous bits. A symbol error occurs if at least one of these bits is perturbed. Correspondingly, if random perturbations infect multiple contiguous bits, the number of symbol errors is an appropriate distance measure. This is indeed the case in smart road signs due to possible obfuscation by plants or graffiti or adversarial stickers as studied in \cite{ref:Eykholt18} or as illustrated in Fig. \ref{fig:attack}. When there are blurring due to lighting throughout a day, fading colors, or weather conditions, we would also expect perturbations on multiple contiguous bits, instead of perturbations on single isolated bits (which may require surgical-like precision due to the relatively small size of a single bit). Furthermore, error-correction codes, e.g., Reed-Solomon codes, provide effective guarantees against symbol errors. 
\end{remark}

{\em Defense Model.} \player{D} has multiple objectives:
\begin{itemize}
\item[$O1)$] to minimize the cost of losing the opportunity to prevent future attacks by not being able to detect it now,
\item[$O2)$] to minimize the cost associated with adversary-induced decoding error/failure,
\item[$O3)$] to minimize the cost associated with false alarms,
\item[$O4)$] to maximize the number of symbol errors necessary to deceive the decoder.
\end{itemize}
We model \player{D}'s cost function as a linear combination of these objectives with certain multiplicative factors, which give flexibility to control the weight of the corresponding objective as desired. This cost function will be defined explicitly in the game model. 

We let $\trueSymbols\subset\symbolsSet$ denote the set of encoded codewords, i.e., there exists bijective relation in-between $\trueSymbols$ and the set of road signs. We also let $\decodableSymbols\subset\symbolsSet$ denote the set of decodable codewords that are within $d_o$ diameter of a codeword in $\trueSymbols$. If $\noisySymbol\in\decodableSymbols$, we denote the decoder output by $\decoderOutput\in\trueSymbols$. The loss due to decoding error/failure is given by
\begin{equation}\label{eq:loss}
\loss(\trueSymbol,\noisySymbol) = \left\{\begin{array}{ll} 0 & \mbox{if } \noisySymbol \in \decodableSymbols \mbox{ and } \decoderOutput=\trueSymbol \\ 1 & \mbox{otherwise.}  \end{array}\right.
\end{equation} 

If $\noisySymbol\in\decodableSymbols$, \player{D} can report an issue against the possibility of adversarial intrusion so that further (costly) investigations can take place. To this end, \player{D} designs a randomized detection rule $\Dmixed\in\DmixedSpace$, where $\Dmixed_j$, $j=0,\ldots,d_o$, corresponds to the probability of triggering an alert for $j$ symbol errors. If $\noisySymbol\notin\decodableSymbols$, further investigations always take place.
  
\begin{remark}[Scalable Defense]
We consider a randomized detection rule depending on the number of symbol errors for scalability. If \player{D} were to select a (randomized) detection rule based on the received codeword, then \player{D} would select a vector over the space $[0,1]^{|\symbolsSet|}$, which is $q^n$ dimensional.  Note that $q^n$ is exponential in the number of symbols $n$ whereas $d_o+1 \ll q^n$ is linear in $n$.\end{remark}

{\em Threat Model.} \player{A} is the worst case attacker who maximizes \player{D}'s cost function. To this end, \player{A} can select which road sign to attack. Let $\Aactiono\in \trueSymbols$ denote the codeword of the attacked road sign. Then, \player{A} can craft $\Aactiono\in\trueSymbols$ to $\Aactiona\in\symbolsSet$ by introducing error in order to control the decoder output. The complexity of this crafting is given by the number of symbols changed, i.e., $\Hamming(\Aactiono,\Aactiona)$. We denote \player{A}'s action space by $\AactionSpace:=\trueSymbols\times\symbolsSet$ and denote \player{A}'s action by $\Aaction:=(\Aactiono,\Aactiona)$. \player{A} can select a mixed strategy $\Amixed = \{\alpha_{\Aaction}\}$ over $\AactionSpace$ such that $\Amixed_{\Aaction}$ denotes the probability of taking action $\Aaction=(\Aactiono,\Aactiona)\in\AactionSpace$, i.e., attacking the codeword $\Aactiono\in\trueSymbols$ and crafting it to $\Aactiona\in\symbolsSet$. 

{\em Game Model.} We consider a zero-sum game setting where \player{D} seeks to minimize the cost function:
\begin{align}\label{eq:cost}
U(\Amixed,\Dmixed) = \,&\factor{1}\sum_{\Aaction \in \AactionSpace}\Amixed_{\Aaction} \Bigg(\sum_{j=0}^{d_o}(1-\Dmixed_j) \hspace{-.1in}\sum\limits_{\substack{\noisySymbol\in\decodableSymbols\\ \ni \Hamming(\noisySymbol,\decoderOutput)=j}}\hspace{-.1in}p(\noisySymbol|\Aactiona)\Bigg)\nn\\
+&\factor{2}\sum_{\Aaction \in \AactionSpace}\Amixed_{\Aaction}\sum_{\noisySymbol\in\noisySet}\loss(\Aactiono,\noisySymbol)p(\noisySymbol|\Aactiona)\nn\\
+&\factor{3} \sum_{j=0}^{d_o}\Dmixed_j \sum_{\trueSymbol\in\trueSymbols}\hspace{-.1in}\sum\limits_{\substack{\noisySymbol\in\decodableSymbols\\ \ni \Hamming(\noisySymbol,\trueSymbol)=j}}\hspace{-.1in}p(\noisySymbol | \trueSymbol)p(\trueSymbol)\nn\\
-&\factor{4} \sum_{\Aaction \in \AactionSpace} \Amixed_{\Aaction} \Hamming(\Aactiono,\Aactiona)
\end{align}
against the worst-case \player{A} who seeks to maximize \eqref{eq:cost}. We define certain multiplicative factors $\factor{j}\geq0$, $j=1,2,3,4$, corresponding, respectively, to \player{D}'s objectives $O1)-O4)$. Note that minimization of the expected cost due to the uncertainty of the channel $\channel$ is in-line with the expectation-over-transformation framework proposed in \cite{ref:Athalye18b}. The attackers can generate robust attacks by considering the expected impact of the uncertainties due to the channel \cite{ref:Athalye18b}. 

\begin{remark}[Attack Probability]
If \player{D} has a priori information $p_a\in[0,1]$ corresponding to the probability of adversarial intervention, then we can incorporate this into \eqref{eq:cost} by selecting the multiplicative factors accordingly. For example, the objectives $O1)$, $O2)$, and $O4)$ matter if there is an adversarial intervention while the objective $O3)$ matters if there is no adversarial intervention. To this end, we can scale up $\factor{1}$, $\factor{2}$, and $\factor{4}$ by $p_a$ while scaling up $\factor{3}$ by $1-p_a$.
\end{remark}

We consider a hierarchical setting, where \player{A} can know (or learn) \player{D}'s randomized detection algorithm, in order to avoid obscurity based defense, which can be bypassed when an advanced attacker learns the information in obscurity. Therefore, this interaction can be modeled as a Stackelberg zero-sum game\footnote{$\Delta^{d-1}\subset\R^{d}$ denotes the probability simplex formed by $d$ standard unit vectors.}
\begin{equation}
\game := (\AmixedSpace,\DmixedSpace,U,\loss(\cdot),\channel,p(\trueSymbol),\{\factor{j}\}),
\end{equation}
where \player{D} is the leader and \player{A} is the follower. Since \player{A} is the follower and reacts to \player{D}'s strategy $\Dmixed\in\DmixedSpace$, the problem faced by the detector is given by 
\begin{align}\label{eq:SE}
\min_{\Dmixed\in\DmixedSpace}\max_{\Amixed\in\AmixedSpace} U(\Amixed,\Dmixed).
\end{align}

The following proposition shows that there exists an equilibrium to the game $\game$.

\begin{proposition}[Existence Result]\label{prop:existence}
There exists a pair of \player{D}'s strategy and \player{A}'s reaction $(\Dmixed^*,\bestResponse(\Dmixed^*))$ attaining the Stackelberg equilibrium $\game$, i.e., satisfying \eqref{eq:SE}.
\end{proposition}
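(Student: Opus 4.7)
The plan is to establish existence by applying Weierstrass's theorem twice, once to the inner maximization over $\AmixedSpace$ and once to the outer minimization over $\DmixedSpace$, after verifying joint continuity of $U$ and compactness of both strategy spaces. This is the standard route for Stackelberg existence when the objective is bilinear and the strategy sets are finite-dimensional and compact.

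First, I would observe from \eqref{eq:cost} that $U(\Amixed, \Dmixed)$ is a finite linear combination of products of the form $\Amixed_{\Aaction}\Dmixed_j$ together with linear terms in $\Amixed_{\Aaction}$ alone, where the coefficients are built from $p(\noisySymbol | \justSymbol)$, $p(\trueSymbol)$, $\loss(\cdot,\cdot)$, $\Hamming(\cdot,\cdot)$, and the fixed factors $\factor{j}$. Consequently, $U$ is bilinear, and therefore jointly continuous, in $(\Amixed, \Dmixed)$. Since $\trueSymbols$ and $\symbolsSet$ are finite, $\AactionSpace = \trueSymbols \times \symbolsSet$ is finite and $\AmixedSpace = \Delta^{|\AactionSpace|-1}$ is a compact convex subset of a finite-dimensional Euclidean space; the detector's strategy space $\DmixedSpace = [0,1]^{d_o+1}$ is a closed and bounded hypercube and hence also compact. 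By Weierstrass's theorem, for each fixed $\Dmixed \in \DmixedSpace$, the continuous map $\Amixed \mapsto U(\Amixed, \Dmixed)$ attains its maximum on $\AmixedSpace$, so the best-response set $\bestResponse(\Dmixed) := \argmax_{\Amixed \in \AmixedSpace} U(\Amixed, \Dmixed)$ is non-empty.

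Next, I would let $V(\Dmixed) := \max_{\Amixed \in \AmixedSpace} U(\Amixed, \Dmixed)$ and verify that $V$ is continuous on $\DmixedSpace$. One clean route is Berge's maximum theorem: the feasibility correspondence is constant-valued, hence trivially both upper and lower hemicontinuous, and $U$ is jointly continuous on the compact product $\AmixedSpace \times \DmixedSpace$, so $V$ is continuous. An equivalent direct argument is to note that the coefficients of the bilinear form are bounded and $\Amixed$ lies in the unit simplex, so $U(\Amixed, \cdot)$ is Lipschitz with a constant independent of $\Amixed$; then the envelope bound $|V(\Dmixed) - V(\Dmixed')| \leq \max_{\Amixed \in \AmixedSpace} |U(\Amixed, \Dmixed) - U(\Amixed, \Dmixed')|$ yields continuity of $V$. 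Applying Weierstrass's theorem a second time to $V$ on the compact set $\DmixedSpace$ produces a minimizer $\Dmixed^* \in \DmixedSpace$; pairing it with any $\bestResponse(\Dmixed^*) \in \argmax_{\Amixed \in \AmixedSpace} U(\Amixed, \Dmixed^*)$, which is non-empty by the first step, gives a pair satisfying \eqref{eq:SE}.

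The technical content is light; the only step that requires attention is the continuity of the value function $V$, where one invokes Berge's maximum theorem (or the uniform-Lipschitz argument sketched above). Everything else reduces to bilinearity of $U$ and finite-dimensional compactness of the two strategy spaces, both of which are immediate from the setup and do not rely on any structural assumption on the channel $\channel$, the prior $p(\trueSymbol)$, or the factors $\factor{j}\geq 0$ beyond their being fixed and finite.
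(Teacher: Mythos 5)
Your proposal is correct and follows essentially the same route as the paper: joint continuity of the bilinear cost $U$ on the compact decoupled strategy sets, the (Berge) maximum theorem to get continuity of the value function $\Dmixed \mapsto \max_{\Amixed\in\AmixedSpace} U(\Amixed,\Dmixed)$, and the extreme value theorem on the compact set $\DmixedSpace$ to obtain a minimizer. Your additional uniform-Lipschitz envelope argument is a fine elementary substitute for the maximum theorem, but it does not change the substance of the proof.
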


\begin{proof}
Note that $U(\Amixed,\Dmixed)$ is linear, and correspondingly, continuous in the optimization arguments $\Amixed\in\AmixedSpace$ and $\Dmixed\in\DmixedSpace$, and the constraint sets are decoupled. Therefore the maximum theorem \cite{ref:Ok07} yields that
\begin{equation}\label{eq:Areact}
\max_{\Amixed\in\AmixedSpace} U(\Amixed,\Dmixed)
\end{equation}
is a continuous function of $\Dmixed\in\DmixedSpace$. Then, since $\DmixedSpace$ is a compact set, the extreme value theorem yields that there exits a solution for \eqref{eq:SE}. 
\end{proof}

In the following section, we analyze the equilibrium to $\game$.

\section{Adversarial Intervention Detection Across Smart Road Signs} \label{sec:game}

Existence of an equilibrium is guaranteed as shown in Proposition \ref{prop:existence}. However, computation of the equilibrium can be demanding (even if it is an offline computation) since \player{A} has a large strategy space, even for short codewords. In this section, our goal is to examine \player{A}'s best response for efficient computation of the best detection rule. To this end, we first seek to formulate certain equivalence classes over \player{A}'s actions such that all the actions in a class lead to the same outcome of the game (see, Subsection \ref{sec:equivalence}). However, depending on the size of the input space, i.e., $\symbolsSet$, computation of the equilibrium may still be demanding for that quotient space. In order to avoid such a computational issue for long codewords that can express relatively larger collection of road signs, we relax the constraints on \player{A}'s action space, which will lead to more powerful attacker than in practice (see, Subsection \ref{sec:relax}). This yields a conservative defense, which leads to lower cost against the actual attacker who is relatively less powerful in run-time applications. Finally, we transform the problem into an efficient LP, rather routinely, in order to apply existing powerful computational tools to compute the best detection rule. We now provide the details of these steps.    

Our goal is to compute the best detection rule $\Dmixed^*\in\DmixedSpace$ with respect to the equilibrium \eqref{eq:SE}. To this end, \player{D} needs to anticipate \player{A}'s reaction to any selected detection rule. However, \player{A}'s action space $\AactionSpace$ has dimension $|\AactionSpace| = q^{n+k}$, which is exponential in $n+k$. Therefore, finding the best reaction, i.e., a vector in $\AmixedSpace$, for each detection rule is computationally demanding. Accordingly, in the following, we seek to reduce \player{A}'s strategy space without losing the generality.

\subsection{Equivalence Classes on \player{A}'s Best Response}\label{sec:equivalence}

\begin{figure}[t!]
\centering
\begin{tikzpicture}

\def\scale{1.3}
\def\L{6*\scale}
\def\l{5*\scale}
\def\lo{\l/5}
\def\loo{2*\lo}
\def\looo{3*\lo}
\def\loooo{4*\lo}
\def\lshift{\lo/10}
\def\xshift{\lo/5}
\def\Ax{0}
\def\Ay{0}
\def\Bx{\Ax-\l*cos(74)}
\def\By{\Ay+\l*sin(74)}
\def\Cx{\Ax+\L*cos(53)}
\def\Cy{\Ay+\L*sin(53)}
\def\shifto{17.5}
\def\shiftoo{9}
\def\shiftooo{6}
\def\shiftoooo{4.5}

\fill[myorange] ({\Ax},{\Ay}) circle(.03in);
\fill[myorange] ({\Ax+\looo*cos(53)},{\Ay+\looo*sin(53)}) circle(.03in);

\draw[line width = .2mm,dotted] ({\Ax},{\Ay}) node[left] {$\trueSymbol^1$} -- 
                                        ({\Bx},{\By}) node[above] {$\trueSymbol^2$} -- 
                                        ({\Cx},{\Cy}) node[above] {$\trueSymbol^3$} -- 
                                        ({\Ax},{\Ay}) {};

\draw[pattern=north west lines, pattern color=myblue,draw=none] ({\Bx},{\By}) -- ({\Bx+\loo},{\By}) arc[radius = \loo, start angle = 0, end angle=-74] -- ({\Bx},{\By});
\draw[draw=myblue,line width = .2mm] ({\Bx+\lo*cos(\shifto)},{\By+\lo*sin(\shifto)}) arc[radius = \lo, start angle = \shifto, end angle=-74-\shifto];
\draw[draw=myblue,line width = .2mm] ({\Bx+\loo*cos(\shiftoo)},{\By + \loo*sin(\shiftoo)}) arc[radius = \loo, start angle = \shiftoo, end angle=-74-\shiftoo];
\draw[draw=myblue,line width = .2mm] ({\Bx+\looo*cos(\shiftooo)},{\By+\looo*sin(\shiftooo)}) arc[radius = \looo, start angle = \shiftooo, end angle=-74-\shiftooo];

\draw[pattern=north west lines, pattern color=myblue,draw=none] ({\Cx},{\Cy}) -- ({\Cx-\loo},{\Cy}) arc[radius = \loo, start angle = 180, end angle=233] -- ({\Cx},{\Cy});
\draw[draw=myblue,line width = .2mm] ({\Cx-\lo*cos(\shifto)},{\Cy+\lo*sin(\shifto)}) arc[radius = \lo, start angle = 180-\shifto, end angle=233+\shifto];
\draw[draw=myblue,line width = .5mm,line join=round,
decorate,decoration={coil,amplitude=1.5}] ({\Cx-\loo*cos(\shiftoo)},{\Cy+\loo*sin(\shiftoo)})  arc[radius = \loo, start angle = 180-\shiftoo, end angle=233+\shiftoo] node[right] {Choice-$3$};
\draw[draw=myblue,line width = .2mm] ({\Cx-\looo*cos(\shiftooo)},{\Cy+\looo*sin(\shiftooo)}) arc[radius = \looo, start angle = 180-\shiftooo, end angle=233+\shiftooo];

\draw[<-,line width = .3mm,myblue!50!black] ({\Cx-(\lo+2*\lshift)*cos(0)},{\Cy-(\lo+2*\lshift)*sin(0)-\xshift}) -- ({\Cx-(\loo-2*\lshift)*cos(0)},{\Cy-(\loo-2*\lshift)*sin(0)-\xshift});
\draw[->,line width = .5mm,myblue!50!black] ({\Cx-(\loo+2*\lshift)*cos(0)},{\Cy-(\loo+2*\lshift)*sin(0)-\xshift}) -- ({\Cx-(\looo-2*\lshift)*cos(0)},{\Cy-(\looo-2*\lshift)*sin(0)-\xshift});
\draw[<-,line width = .3mm,myblue!50!black] ({\Cx-(\lo+\lshift)*cos(53)-\xshift},{\Cy-(\lo+\lshift)*sin(53)}) -- ({\Cx-(\loo-3*\lshift)*cos(53)-\xshift},{\Cy-(\loo-3*\lshift)*sin(53)});
\draw[->,line width = .5mm,myblue!50!black] ({\Cx-(\loo+1*\lshift)*cos(53)-\xshift},{\Cy-(\loo+1*\lshift)*sin(53)}) -- ({\Cx-(\looo-3*\lshift)*cos(53)-\xshift},{\Cy-(\looo-3*\lshift)*sin(53)});
\draw[<->,line width = .4mm,myblue!50!black] ({\Cx-(\loo-2*\lshift)*cos(15)},{\Cy-(\loo-2*\lshift)*sin(15)}) arc[radius=\loo-2*\lshift,start angle=195, end angle=218];

\draw[pattern=north west lines, pattern color=myblue,draw=none] ({\Ax},{\Ay}) node[right] {~~Choice-$0$} -- ({\Ax+\loo*cos(53)},{\Ay+\loo*sin(53)}) arc[radius = \loo, start angle = 53, end angle=106] -- ({\Ax},{\Ay});    
\draw[->,line width = .5mm] ({\Ax-(\lo/2)*cos(20)},{\Ay+(\lo)*sin(20)}) node[left]{Decodable Region} -- ({\Ax-cos(74+\shifto)*\lo/2},{\Ay+sin(74+\shifto)*\lo/2});                                    
\draw[draw=myorange,line width = .2mm] ({\Ax+\lo*cos(53-\shifto)},{\Ay+\lo*sin(53-\shifto)}) arc[radius = \lo, start angle = 53-\shifto, end angle=106+\shifto] node[left] {$1$ error};
\draw[draw=myorange,line width = .5mm,line join=round,
decorate,decoration={coil,amplitude=1.5}] ({\Ax+\loo*cos(53-\shiftoo)},{\Ay+\loo*sin(53-\shiftoo)}) node[right] {Choice-$1$} arc[radius = \loo, start angle = 53-\shiftoo, end angle=106+\shiftoo] node[left] {$2$ errors};                                        
\draw[draw=myorange,line width = .2mm] ({\Ax+\looo*cos(53-\shiftooo)},{\Ay+\looo*sin(53-\shiftooo)}) node[right] {Choice-$2$} arc[radius = \looo, start angle = 53-\shiftooo, end angle=106+\shiftooo]  node[left] {$3$  errors};
\draw[draw=myorange,line width = .2mm] ({\Ax+\loooo*cos(53-\shiftoooo)},{\Ay+\loooo*sin(53-\shiftoooo)}) arc[radius = \loooo, start angle = 53-\shiftoooo, end angle=106+\shiftoooo] node[left] {$4$ errors};

\draw[<-,line width = .3mm,myorange!50!black] ({\Ax+(\lo+3*\lshift)*cos(53)-\xshift},{\Ay+(\lo+3*\lshift)*sin(53)}) -- ({\Ax+(\loo-\lshift)*cos(53)-\xshift},{\Ay+(\loo-\lshift)*sin(53)});
\draw[<-,line width = .3mm,myorange!50!black] ({\Ax-(\lo+2*\lshift)*cos(74)+\xshift},{\Ay+(\lo+2*\lshift)*sin(74)}) -- ({\Ax-(\loo-2*\lshift)*cos(74)+\xshift},{\Ay+(\loo-2*\lshift)*sin(74)});
\draw[->,line width = .5mm,myorange!50!black] ({\Ax+(\loo+3*\lshift)*cos(53)-\xshift},{\Ay+(\loo+3*\lshift)*sin(53)}) -- ({\Ax+(\looo-\lshift)*cos(53)-\xshift},{\Ay+(\looo-\lshift)*sin(53)});
\draw[->,line width = .5mm,myorange!50!black] ({\Ax-(\loo+2*\lshift)*cos(74)+\xshift},{\Ay+(\loo+2*\lshift)*sin(74)}) -- ({\Ax-(\looo-2*\lshift)*cos(74)+\xshift},{\Ay+(\looo-2*\lshift)*sin(74)});
\draw[<->,line width = .4mm,myorange!50!black] ({\Ax-(\loo-2*\lshift)*cos(89)},{\Ay+(\loo-2*\lshift)*sin(89)}) arc[radius=\loo-2*\lshift,start angle=91, end angle=68];

\end{tikzpicture}
\vspace{.1in}
\caption{Figurative illustration of $\symbolsSet$ for the Reed-Solomon Code $[7,3,5]_q$. Suppose the attacked codeword is $\Aactiono = \trueSymbol^1$. Decodable regions for the codewords $\trueSymbol^1,\trueSymbol^2,\trueSymbol^3\in\trueSymbols$ are shaded and arcs correspond to the levels of symbol errors. The color coded arrows illustrate figuratively how the corresponding level of symbol error would change with additional nature-induced noisy perturbation of the crafted codeword.}
\label{fig:levels}
\end{figure}

Fig. \ref{fig:levels} provides a figurative illustration of how \player{A} can attack. Note that error-correction methods ensure that different encoded codewords are at least a certain number of symbols away from each other. For the code $[7,3,5]_q$ in Fig. \ref{fig:levels}, this minimum distance is $d = 5$ symbols as exemplified in between $\trueSymbol^1$ and $\trueSymbol^2$. Any perturbation that can change at most $d_o = 2$ symbols does not lead to any decoding error or failure. However, at certain directions, perturbations on $3$ symbols can lead to a decoding error, e.g., by carrying $\trueSymbol^1$ to the decodable region of $\trueSymbol^2$, or a decoding failure depending on how and which symbols are perturbed. Correspondingly, if \player{A} decides to attack on $\Aactiono=\trueSymbol^1$, then \player{A} has several choices while physically damaging the corresponding smart code. In the following, we categorize those choices into four main groups: 
\begin{itemize}
\item[$C0)$] \player{A} may not attack, i.e., may not introduce any error.
\item[$C1)$] \player{A} may introduce relatively smaller amount of symbol error(s) such that the corrupted codeword is still in the decodable region of the associated codeword. Due to the channel, this can still lead to decoding error or failure with certain probabilities.
\item[$C2)$] \player{A} may introduce symbol errors such that the corrupted codeword becomes not decodable. 
\item[$C3)$] \player{A} may introduce relatively larger amount of symbol errors such that the codeword intervened is in the decodable region of another codeword. 
\end{itemize}
Even if there were not any detection rule, in Fig. \ref{fig:levels}, we observe that not attacking, i.e., $C0)$, or attacking relatively more aggressively, e.g., $C2)$ and $C3)$, are not necessarily more preferable for \player{A} than $C1)$ due to the noisy perturbations and the trade-off between the amount of perturbation and the gain of \player{A} by decoding error/failure. In the following, we examine the channel, which can lead to such intriguing results. 

Recall that any symbol can be perturbed by the channel with the same probability $\errorProb$ while the symbol perturbed can change to any other symbol in the alphabet with the same probability, i.e., $1/(q-1)$. Then, the probability that $\justSymbol$ turns into $\noisySymbol$ due to noisy channel can be written as
\begin{equation}\label{eq:probTrans}
p(\noisySymbol | \justSymbol) = (1-\errorProb)^{n-\Hamming(\noisySymbol,\justSymbol)}\left(\frac{\errorProb}{q-1}\right)^{\Hamming(\noisySymbol,\justSymbol)},
\end{equation}
which only depends on the distance in-between $\noisySymbol\in\symbolsSet$ and $\justSymbol\in\symbolsSet$. Particularly, there are $n-\Hamming(\noisySymbol,\justSymbol)$ symbols that match at $\justSymbol$ and $\noisySymbol$. There should not be any perturbations on those symbols, which leads to the first multiplicative term on the right-hand-side of \eqref{eq:probTrans}. For each symbol that does not match, the random perturbation must change the one at $\justSymbol$ to the corresponding one at $\noisySymbol$ among $q-1$ equally likely alternatives, which leads to the second multiplicative term. 

Since $p(\noisySymbol | \justSymbol)$ only depends on $\Hamming(\noisySymbol,\justSymbol)$, we define an auxiliary metric $\rho:\Z\times\Z \rightarrow [0,1]$, where $\rho(n_1,n_2)$ denotes the probability that two codewords that are $n_1\in\Z$ symbols away become $n_2\in\Z$ symbols away when one of them is randomly perturbed by the channel. Note that we can compute $\rho(\cdot)$ based on combinatorics analytically or using the Monte Carlo method \cite{ref:Kroese14} numerically. With this new auxiliary metric, let us take a closer look into the objectives $O1)$ and $O2)$, where the channel and \player{A} have impact on. Firstly, the term in parenthesis in $O1)$ can be written as
\begin{align}
\sum_{j=0}^{d_o}(1-\Dmixed_j) \hspace{-.1in}\sum\limits_{\substack{\noisySymbol\in\decodableSymbols\\ \ni \Hamming(\noisySymbol,\decoderOutput)=j}}\hspace{-.1in}p(\noisySymbol|\Aactiona&) = \sum_{j=0}^{d_o} (1-\Dmixed_j) \underbrace{\sum_{\trueSymbol\in\trueSymbols}\rho(\Hamming(\trueSymbol,\Aactiona),j)},\label{eq:piTp}
\end{align}
where the under-braced term corresponds to the total probability that $\Aactiona$ moves to $j$ symbols away from an encoded codeword due to the random noise. Similarly, we can write the inner summation in $O2)$ as
\begin{align}
\sum_{\noisySymbol\in\symbolsSet} \loss(\Aactiono,\noisySymbol) p(\noisySymbol | \Aactiona) &= \sum_{\noisySymbol\in\symbolsSet} \indicator_{\{\Hamming(\Aactiono,\noisySymbol)>d_o\}} p(\noisySymbol|\Aactiona)\\
&= 1- \sum_{t=0}^{d_o} \rho(\Hamming(\Aactiono,\Aactiona),t),\label{eq:hloss}
\end{align}
where the first line follows since a detection error or failure occurs if the received codeword $\noisySymbol\in\symbolsSet$ is more than $d_o\in\Z$ symbols away from the codeword $\Aactiono\in\trueSymbols$ before \player{A} crafts it into $\Aactiona\in\symbolsSet$. The second line follows by the definition of the new auxiliary metric $\rho(\cdot)$.

Note that $O1)$ written according to \eqref{eq:piTp} depends on the distance between $\Aactiona\in\symbolsSet$ and all the encoded codewords $\trueSymbol\in\trueSymbols$. Similarly, only the distance between $\Aactiono\in\trueSymbols$ and $\Aactiona\in\symbolsSet$ has an impact on $O2)$ and $O4)$ while we also have $\Aactiono \in \trueSymbols$. Therefore, the attacks that target $\Aactiono$ have the same impact on the cost function \eqref{eq:cost} if the distances between $\Aactiona$ and the encoded codewords are the same. 

Indeed, there is a strong coupling on how \player{A} would select $\Aactiono$ and $\Aactiona$ independent of \player{D}'s strategy. Particularly, \eqref{eq:piTp}, and correspondingly $O1)$, do not include $\Aactiono\in\trueSymbols$. On the other side, the objectives $O2)$ and $O4)$, which include $\Aactiono\in\trueSymbols$, do not include $\Dmixed\in\DmixedSpace$. Therefore for given $\Aactiona\in\symbolsSet$, \player{A} can select $\Aactiono\in\trueSymbols$ irrespective of \player{D}'s detection rule. Based on this observation in the following lemma, we eliminate weakly dominated actions of \player{A} in order to reduce the size of \player{A}'s strategy space.

\begin{lemma}
In the game $\game$, without loss of generality, we can restrict \player{A}'s action space $\AactionSpace$ into 
\begin{align}
\AactionSpace^d = \{(\Aactiono^*,\Aactiona\in\symbolsSet)\}, 
\end{align}
where $\Aactiono^*$ is the maximizer of the optimization problem:
\begin{align}
\max_{\Aactiono\in\trueSymbols} \factor{2}\Big(1 -  \sum_{t=0}^{d_o} \rho(\Hamming(\Aactiono,\Aactiona),t)\Big) - \factor{4} \Hamming(\Aactiono,\Aactiona).\label{eq:Aoopt}
\end{align}
\end{lemma}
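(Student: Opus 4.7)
The plan is to exploit the decoupling already foreshadowed in the discussion preceding the lemma: in the cost $U(\Amixed,\Dmixed)$, the attacked codeword $\Aactiono$ appears only in the $O2)$ and $O4)$ terms, whereas the $O1)$ term depends on \player{A}'s action only through $\Aactiona$, and the $O3)$ term does not depend on \player{A}'s action at all. The idea is therefore to argue, for each candidate crafted codeword $\Aactiona\in\symbolsSet$, a pointwise (in $\Dmixed$) weak dominance argument over the choice of $\Aactiono$.

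First, I would fix an arbitrary $\Dmixed\in\DmixedSpace$ and a pure action $\Aaction=(\Aactiono,\Aactiona)\in\AactionSpace$, and substitute \eqref{eq:piTp} and \eqref{eq:hloss} into \eqref{eq:cost}. This lets me write the per-action contribution inside $U$ as
\begin{align*}
\phi(\Aactiono,\Aactiona;\Dmixed) &= \factor{1}\sum_{j=0}^{d_o}(1-\Dmixed_j)\sum_{\trueSymbol\in\trueSymbols}\rho(\Hamming(\trueSymbol,\Aactiona),j)\\
&\quad + \factor{2}\Bigl(1-\sum_{t=0}^{d_o}\rho(\Hamming(\Aactiono,\Aactiona),t)\Bigr) - \factor{4}\Hamming(\Aactiono,\Aactiona),
\end{align*}
plus the pieces of $U$ independent of $\Aaction$ (the $\factor{3}$ term). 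The crucial observation is that the first line depends on $\Aactiona$ only, so that
\begin{equation*}
\phi(\Aactiono,\Aactiona;\Dmixed) = \psi(\Aactiona;\Dmixed) + \chi(\Aactiono,\Aactiona),
\end{equation*}
with $\chi(\Aactiono,\Aactiona) := \factor{2}\bigl(1-\sum_{t=0}^{d_o}\rho(\Hamming(\Aactiono,\Aactiona),t)\bigr) - \factor{4}\Hamming(\Aactiono,\Aactiona)$ being exactly the objective in \eqref{eq:Aoopt}, and $\psi(\Aactiona;\Dmixed)$ carrying the $\Dmixed$ dependence.

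Next, for each $\Aactiona\in\symbolsSet$, define $\Aactiono^*(\Aactiona)\in\argmax_{\Aactiono\in\trueSymbols}\chi(\Aactiono,\Aactiona)$ (the argmax is attained because $\trueSymbols$ is finite). For any pure action $(\Aactiono,\Aactiona)$, replacing $\Aactiono$ by $\Aactiono^*(\Aactiona)$ leaves $\psi(\Aactiona;\Dmixed)$ untouched and can only increase $\chi$, hence it weakly increases $\phi$ uniformly in $\Dmixed$. Given any mixed strategy $\Amixed\in\AmixedSpace$, I would then construct a transported mixed strategy $\Amixed'$ supported on $\AactionSpace^d$ by collapsing, for each $\Aactiona$, the mass $\sum_{\Aactiono\in\trueSymbols}\Amixed_{(\Aactiono,\Aactiona)}$ onto the single action $(\Aactiono^*(\Aactiona),\Aactiona)$. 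Linearity of $U$ in $\Amixed$ then yields $U(\Amixed',\Dmixed)\geq U(\Amixed,\Dmixed)$ for every $\Dmixed$. Consequently
\begin{equation*}
\max_{\Amixed\in\AmixedSpace}U(\Amixed,\Dmixed)=\max_{\Amixed\in\Delta(\AactionSpace^d)}U(\Amixed,\Dmixed),
\end{equation*}
so replacing $\AactionSpace$ by $\AactionSpace^d$ leaves the Stackelberg problem \eqref{eq:SE} unchanged, which is the content of the lemma.

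The argument is essentially a clean bookkeeping exercise; I do not anticipate a substantive obstacle. The only mildly delicate point is verifying that the uniformity in $\Dmixed$ is genuine, i.e., that the choice $\Aactiono^*(\Aactiona)$ can be made independently of $\Dmixed$. This follows from the explicit observation that $\chi(\Aactiono,\Aactiona)$ has no $\Dmixed$ dependence, so the same $\Aactiono^*(\Aactiona)$ works simultaneously for every detection rule and in particular is consistent with the leader–follower order in \eqref{eq:SE}. If $\argmax_{\Aactiono}\chi(\Aactiono,\Aactiona)$ is not unique, any measurable selection suffices.
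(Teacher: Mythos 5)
Your proposal is correct and takes essentially the same route as the paper's proof: both rewrite the $O2)$ and $O4)$ terms via \eqref{eq:hloss} into the $\Dmixed$-independent objective \eqref{eq:Aoopt}, observe that $\Aactiono$ appears nowhere else in $U(\Amixed,\Dmixed)$, and conclude that for each $\Aactiona\in\symbolsSet$ the maximizing $\Aactiono^*$ can be chosen irrespective of the detection rule, with existence guaranteed by finiteness of $\trueSymbols$. Your explicit mass-transport construction over mixed strategies and the uniformity-in-$\Dmixed$ check merely make rigorous the weak-dominance step the paper leaves implicit.
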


\begin{proof}
The terms corresponding to $O2)$ and $O4)$ in \eqref{eq:cost} can be written as
\begin{align}
&\hspace{-.13in}\factor{2}\sum_{\Aaction \in \AactionSpace}\Amixed_{\Aaction}\sum_{\noisySymbol\in\noisySet}\loss(\Aactiono,\noisySymbol)p(\noisySymbol|\Aactiona) - \factor{4} \sum_{\Aaction \in \AactionSpace} \Amixed_{\Aaction} \Hamming(\Aactiono,\Aactiona) \nn\\
=& \sum_{\Aaction \in \AactionSpace}\Amixed_{\Aaction}\bigg(\factor{2}\Big(1 -  \sum_{t=0}^{d_o} \rho(\Hamming(\Aactiono,\Aactiona),t)\Big) - \factor{4} \Hamming(\Aactiono,\Aactiona)\bigg), 
\end{align}
which follows by \eqref{eq:hloss}. Since \player{A} seeks to maximize the cost \eqref{eq:cost}, for each $\Aactiona\in\symbolsSet$, we can compute the associated optimal attacked codeword, i.e., $\Aactiono^*$, via \eqref{eq:Aoopt}, where a solution is guaranteed to exists since the constraint set $\trueSymbols$ is finite. 
\end{proof}

Note also that \eqref{eq:piTp}, \eqref{eq:hloss}, and \eqref{eq:Aoopt} depend only on the distances between $\Aactiona\in\symbolsSet$ and all the encoded codewords. Therefore, for a given detection rule, any other $\tilde{a}_x\in\symbolsSet$ that has the same set of distances to the encoded codewords would lead to the same cost \eqref{eq:cost}. Correspondingly, we define another auxiliary function $\allHamming:\symbolsSet \rightarrow \Z^{|\trueSymbols|}$ such that $\allHamming(\Aactiona)$ is a vector whose $t$th entry, denoted by $\allHamming_t(\Aactiona)$, corresponds to the distance in-between $\Aactiona$ and the $t$th encoded codeword (with respect to a certain order in $\trueSymbols$). Then, given $\allHamming(\Aactiona)$, \eqref{eq:Aoopt} can be written as
\begin{align}
\reward(\allHamming(\Aactiona)) := \max_{\hbar\in\{\allHamming(\Aactiona)\}} \factor{2} - \factor{4} \hbar - \factor{2}\sum_{t=0}^{d_o} \rho(\hbar,t),\label{eq:Aoopt2}
\end{align}
where $\{\allHamming(\Aactiona)\}$ denotes the set including the entries of the vector $\allHamming(\Aactiona)$. We can view $\reward(\allHamming(\Aactiona))$ as the reward of \player{A} for $\allHamming(\Aactiona)$. Therefore, by \eqref{eq:piTp}, \eqref{eq:hloss}, and \eqref{eq:Aoopt2}, the cost function $U(\Amixed,\Dmixed)$ can be written as
\begin{align}
U(\Amixed,\Dmixed) =\,&\factor{1}\sum_{\Aaction\in\AactionSpace^d} \Amixed_{\Aaction}\bigg(\sum_{j=0}^{d_o} (1-\Dmixed_j)\sum_{t=1}^{|\trueSymbols|}\rho(\allHamming_t(\Aactiona),j)\bigg)\nn\\
+&\factor{3} \sum_{j=0}^{d_o}\Dmixed_j \sum_{\trueSymbol\in\trueSymbols}\hspace{-.1in}\sum\limits_{\substack{\noisySymbol\in\decodableSymbols\\ \ni \Hamming(\noisySymbol,\trueSymbol)=j}}\hspace{-.1in}p(\noisySymbol | \trueSymbol)p(\trueSymbol)\nn\\
+&\sum_{\Aaction\in\AactionSpace^d} \Amixed_{\Aaction}\reward(\allHamming(\Aactiona)).\label{eq:cost2}
\end{align}

The following lemma recaps these results to formulate the equivalence classes on \player{A}'s best response.

\begin{lemma}\label{lem:equivalence}
Without loss of generality, instead of mixing over $\AactionSpace$, \player{A} can select a mixed strategy across the quotient set $\AactionSpace^d/\sim$ with respect to the following equivalence relation:
\begin{align}\label{eq:equivalence}
(\Aactiono^*,\Aactiona) \sim (\tilde{a}_o^*,\tilde{a}_x) \Leftrightarrow \allHamming(\Aactiona) =P\allHamming(\Aactiona'),
\end{align}
for some permutation matrix $P\in\{0,1\}^{|\trueSymbols|\times|\trueSymbols|}$.
\end{lemma}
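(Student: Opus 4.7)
The plan is to show that $U(\Amixed,\Dmixed)$ in the form \eqref{eq:cost2} depends on each action $\Aaction=(\Aactiono^*,\Aactiona)\in\AactionSpace^d$ only through the unordered multiset of Hamming distances from $\Aactiona$ to the encoded codewords, so that any two actions in the same equivalence class under $\sim$ contribute identically to the cost against every detection rule $\Dmixed\in\DmixedSpace$. Once this invariance is established, aggregating probability mass class by class onto a single representative gives a mixed strategy supported on the quotient set $\AactionSpace^d/\sim$ with identical expected cost, which is the desired ``without loss of generality'' reduction.

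First, I would inspect the three summands on the right-hand side of \eqref{eq:cost2} that could depend on $\Aaction$. The middle term does not depend on $\Aaction$ at all, so it plays no role. In the first term, the inner sum $\sum_{t=1}^{|\trueSymbols|}\rho(\allHamming_t(\Aactiona),j)$ is a sum of a scalar function over all entries of $\allHamming(\Aactiona)$ and is therefore invariant under any permutation of those entries. In the third term, $\reward(\allHamming(\Aactiona))$ is defined in \eqref{eq:Aoopt2} as a maximum over the set $\{\allHamming(\Aactiona)\}$ of entries of the vector, which is by construction unordered and hence permutation-invariant as well. Consequently, if $\allHamming(\Aactiona)=P\allHamming(\tilde{a}_x)$ for some permutation matrix $P\in\{0,1\}^{|\trueSymbols|\times|\trueSymbols|}$, then both terms take identical values at $(\Aactiono^*,\Aactiona)$ and at $(\tilde{a}_o^*,\tilde{a}_x)$.

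To conclude, given any $\Amixed\in\AmixedSpace$ supported on $\AactionSpace^d$, I would define $\tilde{\Amixed}$ by collapsing the probabilities of each equivalence class onto a single chosen representative; the preceding term-by-term invariance yields $U(\Amixed,\Dmixed)=U(\tilde{\Amixed},\Dmixed)$ for every $\Dmixed\in\DmixedSpace$, so the maximum in \eqref{eq:Areact} is attained within the quotient set. The only point that requires care is the consistent reading of $\{\allHamming(\Aactiona)\}$ in \eqref{eq:Aoopt2} as the (unordered) multiset of entries of $\allHamming(\Aactiona)$, so that $\reward(\cdot)$ is manifestly a class function rather than a function of the ordering; beyond that clarification, the argument is a direct inspection of \eqref{eq:cost2} and presents no substantive obstacle.
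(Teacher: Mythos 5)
Your proof is correct and follows essentially the same route as the paper: the paper's own (briefer) argument likewise observes that the cost in the form \eqref{eq:cost2} depends on $\Aactiona$ only through the distances to the encoded codewords, so that any permutation of the entries of $\allHamming(\Aactiona)$ leaves the cost unchanged. Your term-by-term verification and the explicit step of collapsing probability mass onto class representatives merely spell out details the paper leaves implicit, including the correct reading of $\{\allHamming(\Aactiona)\}$ in \eqref{eq:Aoopt2} as the unordered collection of entries.
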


\begin{proof}
The cost function, as written in the form of \eqref{eq:cost2}, depends on $\Aactiona\in\symbolsSet$ only with respect to the distances between $\Aactiona\in\symbolsSet$ and the encoded codewords $\trueSymbol\in\trueSymbols$ while the specific identities of the encoded codewords do not impact the cost function. Correspondingly, any permutation of the distances across the encoded codewords would lead to the same cost. 
\end{proof}

In order to facilitate the analysis of the equilibrium, we, next, seek to write the cost function \eqref{eq:cost3} in a compact form.

\subsection{Equilibrium in Compact Form}
Up to now, we have focused on the objectives except $O3)$, on which \player{A}'s strategy does not have direct impact. Similar to \eqref{eq:piTp}, via the auxiliary functions $\rho(\cdot)$ and $\allHamming(\cdot)$, we can write $O3)$ as
\begin{align}
\factor{3} \sum_{j=0}^{d_o}\Dmixed_j &\sum_{\trueSymbol\in\trueSymbols}\hspace{-.1in}\sum\limits_{\substack{\noisySymbol\in\decodableSymbols\\ \ni \Hamming(\noisySymbol,\trueSymbol)=j}}\hspace{-.1in}p(\noisySymbol | \trueSymbol)p(\trueSymbol) \nn\\
&= \factor{3} \sum_{j=0}^{d_o}\Dmixed_j \sum_{x_o\in\trueSymbols}p(\trueSymbol)\sum_{\tilde{x}_o\in\trueSymbols} \rho(\Hamming(\tilde{x}_o, x_o),j)\\
&= \factor{3} \sum_{j=0}^{d_o}\Dmixed_j \sum_{x_o\in\trueSymbols}p(\trueSymbol)\sum_{t=1}^{|\trueSymbols|} \rho(\allHamming_t( x_o),j).\label{eq:O3}
\end{align}
By including \eqref{eq:O3} in \eqref{eq:cost2} and invoking Lemma \ref{lem:equivalence}, we can write the cost function in a way that confines the impact of the channel into the auxiliary metric $\rho(\cdot)$:
\begin{align}
U(\Amixed,\Dmixed) &= \sum_{\Aaction\in\AactionSpace^d/\sim}\sum_{j=0}^{d_o} \Amixed_{\Aaction}\Dmixed_j\bigg(-\factor{1}\sum_{t=1}^{|\trueSymbols|}\rho(\allHamming_t(\Aactiona),j)\bigg)\nn\\
&+\sum_{\Aaction\in\AactionSpace^d/\sim} \Amixed_{\Aaction}\bigg(\reward(\allHamming(\Aactiona)) + \factor{1}\sum_{j=0}^{d_o}\sum_{t=1}^{|\trueSymbols|}\rho(\allHamming_t(\Aactiona),j)\bigg)\nn\\
&+\sum_{j=0}^{d_o}\Dmixed_j \bigg(\factor{3}\sum_{x_o\in\trueSymbols}p(\trueSymbol)\sum_{t=1}^{|\trueSymbols|} \rho(\allHamming_t( x_o),j)\bigg).\label{eq:cost3}
\end{align}

Consider a certain order over the quotient space $\AactionSpace^d/\sim$ such that, with a slight abuse of notation, $\Amixed_i$ corresponds to the mixed strategy for the $i$th action in $\AactionSpace^d/\sim$. For notational simplicity, we also let $\kappa := |\,\AactionSpace^d/\sim|$ and $\tau := |\trueSymbols|$. Then, \eqref{eq:cost3} can be written as
\begin{align}
U(\Amixed,\Dmixed) &= \sum_{i=1}^{\kappa}\sum_{j=0}^{d_o} \Amixed_i \Dmixed_j\bigg(-\factor{1}\sum_{t=1}^{\tau}\rho(\allHamming_t(\Aactiona^i),j)\bigg)\nn\\
&+\sum_{i=1}^{\kappa} \Amixed_{i}\bigg(\reward(\allHamming(\Aactiona^i)) + \factor{1}\sum_{j=0}^{d_o}\sum_{t=1}^{\tau}\rho(\allHamming_t(\Aactiona^i),j)\bigg)\nn\\
&+\sum_{j=0}^{d_o}\Dmixed_j \bigg(\factor{3}\sum_{x_o\in\trueSymbols}p(\trueSymbol)\sum_{t=1}^{\tau} \rho(\allHamming_t( x_o),j)\bigg).\label{eq:cost4}
\end{align}
which can also be transformed into a compact vectoral form. To this end, we define the vectors $\reward \in \R^{\kappa}$ and $p_o\in\R^{\tau}$, whose $i$th entries are given by $\reward(\allHamming(\Aactiona^i))$ and $p(\trueSymbol^i)$, respectively. We also introduce the matrices $\Phi\in\R^{\kappa\times(d_o+1)}$ and $\Phi_o\in\R^{\tau\times(d_o+1)}$ whose $i$th row and $(j+1)$th column entries are given by
\begin{align}\label{eq:entries}
\sum_{t=1}^{\tau} \rho(\allHamming_t(\Aactiona^i),j)\mbox{ and }
\sum_{t=1}^{\tau} \rho(\allHamming_t(\trueSymbol^i),j), 
\end{align}
respectively. We note the shift at the column entries since we have $\Dmixed_j$, $j=0,\ldots,d_o$ instead of $1,\ldots,d_o+1$. Then, we can write \eqref{eq:cost3} as
\begin{align}\label{eq:compact}
U(\Amixed,\Dmixed) = -\factor{1}\Amixed'\Phi\Dmixed + \Amixed'(r + \factor{1}\Phi\vOnes) + \factor{3}p_o'\Phi_o\Dmixed,
\end{align}
which facilitates the computation of the equilibrium. However, the size of $\AactionSpace^d/\sim$ can lead to computational issues for long codewords, i.e., large $n$, even though it has relatively smaller size compared to $\AactionSpace$ without losing any generality as shown in Lemma \ref{lem:equivalence}. To mitigate this issue, in the following, we relax the attack space so that the size of the problem can be reduced further based on the derived equivalence relation \eqref{eq:equivalence}.

\subsection{Relaxing Attack Space at Large Scales}\label{sec:relax}

The cost function in the compact form \eqref{eq:compact} implies that we need to focus on the first and second additive terms that include \player{A}'s mixed strategy in order to reduce \player{A}'s strategy space. Note that on those additive terms, $\Amixed$ is multiplied by the matrix $\Phi\in\R^{\kappa\times(d_o+1)}$ and the vector $r\in\R^{\kappa}$. We can seek to exploit certain properties of $\Phi\in\R^{\kappa\times(d_o+1)}$ and $r\in\R^{\kappa}$. To this end, we will first show that the matrix $\Phi\in\R^{\kappa\times(d_o+1)}$ can be written as in \eqref{eq:Phi}, where $\delta_i\in\Z^{n+1}$ is a vector which can be viewed as the histogram of the distances from $\Aactiona^i$ to the encoded codewords, i.e., $\allHamming(\Aactiona^i)$. Next, we will examine $\allHamming(\Aactiona^i)$ in order to formulate necessary conditions on the histogram $\delta_i$. By only considering those necessary conditions, we relax \player{A}'s strategy space such that he/she selects a mixed strategy from a strategy space with substantially smaller size. Now, we provide the technical details step by step.

\noindent
{\bf Step-$1$. A Closer Look at the Matrix $\Phi$:}
Recall that the $i$th row and the $(j+1)$th column entry of $\Phi\in\R^{\kappa\times(d_o+1)}$ is given by
\begin{equation}
\sum_{t=1}^{\tau} \rho(\allHamming_t(\Aactiona^i),j),
\end{equation}
where the summation is taken across all the encoded codewords. However, we can separate this summation into sub-summations with respect to the distance from $\Aactiona^i$ to the encoded codewords. In particular, we have
\begin{align}
\sum_{t=1}^{\tau} \rho(\allHamming_t(\Aactiona^i),j) &= \sum_{m=0}^n \sum\limits_{\substack{t\in\{1,\ldots,\tau\}\\\ni \Hamming(\Aactiona^i,\trueSymbol^t) = m}} \rho(\allHamming_t(\Aactiona^i),j)\\
&= \sum_{m=0}^n\delta_i^m \rho(m,j),\label{eq:subsum}
\end{align}
where $\delta_i^m\in\Z$ denotes the number of encoded codewords that are $m$ symbols away from $\Aactiona^i$, and the second line follows since $\allHamming_t(\Aactiona^i)=m$ for all $t$ that satisfies $\Hamming(\Aactiona^i,\trueSymbol^t) = m$. Correspondingly, for each $\Aactiona^i\in\symbolsSet$, $i=1,\ldots,\kappa$, we define the following $n+1$ dimensional vector
\begin{equation}\label{eq:deltaVector}
\delta_i := \begin{bmatrix}\delta_i^0 & \cdots & \delta_i^n \end{bmatrix}'.
\end{equation}
Then, \eqref{eq:subsum} and \eqref{eq:deltaVector} yield that $\Phi\in\R^{\kappa\times(d_o+1)}$ can be written as
\begin{align}\label{eq:Phi}
\Phi = \begin{bmatrix} \delta_1' \\ \vdots \\ \delta_{\kappa}' \end{bmatrix}\underbrace{\begin{bmatrix} \rho(0,0) & \cdots & \rho(0,d_o) \\ 
\vdots & \ddots & \vdots \\
\rho(n,0) & \ldots & \rho(n,d_o) \end{bmatrix}}_{=: R}.
\end{align}

Note that all the entries of $\delta_i\in\Z^{n+1}$, $i=1,\ldots,\kappa$, are non-negative integers and sum to the number of all encoded codewords $\tau$. However, these are not necessarily sufficient conditions. Note also that we can view the vector $\delta_i$ as the histogram of the entries of $\allHamming(\Aactiona^i)$. Based on this observation, in the following, we seek for tighter necessary conditions on $\delta_i$ by examining $\allHamming(\Aactiona^i)$.

\noindent
{\bf Step-$2$. An Upper Bound on $\min\allHamming(\cdot)$:}
We first examine the minimum possible distance between an arbitrary codeword and an encoded codeword. Note that a codeword consists of the message and redundantly added symbols:
\begin{equation}
\overbrace{\bigg[\underbrace{\big[ \mbox{message}\big]}_{\in\Sigma^k} \big[\mbox{redundant~symbols}\big]\bigg]}^{\in\symbolsSet}.
\end{equation}
For each message in $\Sigma^k$, there exists a unique encoded codeword. Correspondingly the minimum distance between an arbitrary codeword $\Aactiona\in\symbolsSet$ and encoded codewords, i.e., $\min\allHamming(\Aactiona)$, can be at most $n-k$ since the message part of $\Aactiona$ matches with at least one encoded codeword completely. Therefore, formally, we have
\begin{equation}\label{eq:allbound}
0\leq \min\allHamming(\Aactiona) \leq n-k\;\forall \Aactiona\in\symbolsSet.
\end{equation}

\noindent
{\bf Step-$3$. A Gap in the Ordered $\{\allHamming(\cdot)\}$:}
Since the codewords are encoded such that they are distributed across $\symbolsSet$ with maximum distance in between them, if an arbitrary codeword is relatively close to one of the encoded codewords, e.g., if it is inside the decodable region, the distances between that arbitrary codeword and the other encoded codewords are relatively large. In other words, when we list all the distances from that arbitrary codeword to the encoded codewords in ascending order, then there will be a jump between the distance to the closest one and the distance to the second closest one. For example, if $\Aactiona=\trueSymbol$, then there is no other encoded codeword within a diameter of $d-1$ symbols away from $\Aactiona$.

Particularly, if $\Aactiona\in\symbolsSet$ is in a decodable region of an encoded codeword, e.g., $\trueSymbol\in\trueSymbols$; then $\trueSymbol$ is the encoded codeword closest to $\Aactiona$, i.e., $\min\allHamming(\Aactiona) = \Hamming(\trueSymbol,\Aactiona)$, and there exists only that encoded codeword within $d-\min\allHamming(\Aactiona)-1$ diameter.

\noindent
{\bf Step-$4$. A Contiguousness Assumption on the Ordered $\{\allHamming(\cdot)\}$:}
We have formulated certain necessary conditions on the distance from an arbitrary codeword to the closest and second closest encoded codewords. For the distances to the other encoded codewords, we observe that at large scales, the number of messages $q^k$, i.e., the number of encoded codewords, is significantly larger than the length of the codewords $n$. We suppose that if $\min\allHamming(\Aactiona)\leq d_o$, then there exists at least one encoded codeword at the distances $d-\min\allHamming(\Aactiona),\ldots,n$. Otherwise, i.e., if $\min\allHamming(\Aactiona)>d_o$, there exists at least one encoded codeword at all the distances starting from the closest one $\min \allHamming(\Aactiona)$ to $n$. In particular, formally, we suppose that
\begin{equation}\label{eq:assumption}
\big\{\max\{d-\min\allHamming(\Aactiona),\min\allHamming(\Aactiona)\},\ldots,n\big\} \subset \allHamming(\Aactiona)
\end{equation} 
since $d-\min\allHamming(\Aactiona) \geq \min\allHamming(\Aactiona)$ if $\Aactiona$ is in the decodable region of an encoded codeword, i.e., $\min\allHamming(\Aactiona)\leq d_o = \lfloor (d-1)/2 \rfloor$. 

\noindent
{\bf Step-$5$. The Histogram Under Necessary Conditions:}
Based on the necessary conditions derived in Steps $2$-$4$, in the following, we formulate the necessary conditions on $\delta_i$ under two cases depending on $\min \allHamming(\Aactiona^i)$. If $\Aactiona^i\in\symbolsSet$ is in a decodable region, i.e., $\min \allHamming(\Aactiona^i) \leq d_o$, then we have
\begin{equation}
\delta_i^m = \left\{\begin{array}{ll} 
1 & \mbox{if } m=\min\allHamming(\Aactiona^i)\\
0 & \mbox{if } m\in\{0,\ldots,d-\min\allHamming(\Aactiona^i)-1\}, m\neq \min\allHamming(\Aactiona^i)\\
* & \mbox{otherwise}
\end{array}\right.
\end{equation}
where $*$ corresponds to an unspecified positive integer. If $\Aactiona^i\in\symbolsSet$ is not in any decodable region, i.e., $\min \allHamming(\Aactiona^i) > d_o$, then we have
\begin{equation}
\delta_i^m = \left\{\begin{array}{ll}
0 & \mbox{if } m\in\{0,\ldots,\min\allHamming(\Aactiona^i)-1\}\\
* & \mbox{otherwise}
\end{array}\right.
\end{equation}

\noindent
{\bf Step-$6$. Approximation Under Necessary Conditions:}
Note that the unspecified entries of $\delta_i$ may not necessarily take arbitrary values; however, we will relax this and suppose that the unspecified entries can be set to arbitrary values by \player{A} as long as they are all positive and all entries sum to $\tau$. As an illustration, when we concatenate $\delta_i$ for different scenarios where $\min\allHamming(\Aactiona^i)$ varies from $0$ to $n-k$, we obtain the following $(n+1)\times(n-k+1)$ matrix:
\begin{equation}\label{eq:illustration}
\begin{array}{r} 0 \\ 1 \\ \vdots \\ d_o \\ d_o+1 \\ \vdots \\ d-1 \\ d \\ \vdots \\n \end{array}\left[
\begin{array}{ccccccc}
1&0&\cdots&0&0&\cdots&0\\
0&1&\cdots&0&0&\cdots&0\\
\vdots&\vdots&\ddots&\vdots&\vdots& & \vdots\\
0&0&\cdots&1&0&\cdots&0\\
0&0&\cdots&0/*&*&\cdots&0\\
\vdots&\vdots&\udots&\vdots&\vdots&\ddots&\vdots\\
0&*&\cdots&*&*&\cdots&*\\
*&*&\cdots&*&*&\cdots&*\\
\vdots&\vdots& &\vdots&\vdots& & \vdots\\
*&*&\cdots&*&*&\cdots&*
\end{array}\right],
\end{equation}
where the entry denoted by $0/*$ is $0$ if $d$ is even, and is an unspecified positive integer if $d$ is odd. For example, the first column corresponds to $\Aactiona^i$ whose $\min\allHamming(\Aactiona^i)=0$, which yields that the second closest encoded codeword can be as close as $d-\min\allHamming(\Aactiona^i)=d$ symbols away. 

\begin{figure*}[t!]
\normalsize
\setcounter{MYtempeqncnt}{\value{equation}}
\setcounter{equation}{36}
\begin{align}\label{eq:Lambda}
\Lambda:=\left[\begin{array}{ccc:ccc:c:ccc:ccc:c:ccc} 
1 & 1 & \cdots & 0 & 0 & \cdots & \cdots & 0 & 0 & \cdots & 0 & 0 & \cdots&\cdots&0&0&\cdots\\
0 & 0 & \cdots & 1 & 1 & \cdots & \cdots & 0 & 0 & \cdots & 0 & 0 & \cdots&\cdots&0&0&\cdots\\
\vdots & \vdots & & \vdots & \vdots & &\ddots &\vdots & \vdots & & \vdots&\vdots &&&\vdots&\vdots&\\
0 & 0 & \cdots & 0 & 0 & \cdots & \cdots & 1 & 1 & \cdots & 0& 0 & \cdots& \cdots&0&0&\cdots\\
\cdashline{8-13}
0 & 0 & \cdots & 0 & 0 & \cdots & \cdots & \lambda_{d_o} & 1 & \cdots&\lambda_{d_o+1}& 1 & \cdots &\cdots&0&0&\cdots\\
0 & 0 & \cdots & 0 & 0 & \cdots & \udots & 1 & \lambda_{d_o} & \cdots&1&\lambda_{d_o+1}&\cdots&\cdots&0&0&\cdots\\
\vdots & \vdots & &\vdots & \vdots & & &\vdots&\vdots& &\vdots&\vdots&&\ddots&\vdots&\vdots&\\
\cdashline{4-6}\cdashline{15-17}
0 & 0 & \cdots & \lambda_1 & 1 & \cdots& \cdots & 1 & 1 & \cdots & 1&1&\cdots&\cdots&\lambda_{n-k}&1&\cdots\\
\cdashline{1-3}
\lambda_0 & 1 & \cdots & 1 & \lambda_1 & \cdots& \cdots & 1 & 1 & \cdots& 1&1&\cdots &\cdots&1&\lambda_{n-k}&\cdots\\
1 & \lambda_0 & \cdots & 1 & 1 & \cdots& \cdots & 1 & 1 & \cdots & 1&1&\cdots&\cdots&1&1&\cdots\\
\vdots & \vdots & & \vdots & \vdots & & & \vdots & \vdots & & \vdots&\vdots&&&\vdots&\vdots&\\
1 & 1 & \cdots & 1 & 1 & \cdots & \cdots & 1 & 1 & \cdots&1&1&\cdots&\cdots&1&1&\cdots
\end{array}\right]
\end{align}
\begin{align*}
\lambda_i = 
\left\{
\begin{array}{ll}
\tau-(n-d+i+1)&\mbox{if $i\leq d_o$}\\
\tau-(n-i)&\mbox{o.w.}
\end{array}\right.
\end{align*}
\setcounter{equation}{\value{MYtempeqncnt}}
\hrulefill
\vspace*{4pt}
\end{figure*}

\noindent
{\bf Step-$7$. \player{A}'s Relaxed Strategy Space:}
Based on the relaxation that the unspecified entries can take any values, we seek to reduce \player{A}'s strategy space, which is the main reason of all the steps we have taken up to now. To this end, we first recall that the unspecified entries are all positive and add up to a certain number, which is $\tau-1$ if $\min\allHamming(\Aactiona^i)\leq d_o$, and $\tau$ otherwise. Let us consider an arbitrary column in \eqref{eq:illustration}, e.g., $m$th column where $m > d_o$. Then, the set of all such possible $\delta_i$ is given by
\begin{equation}\label{eq:setExt1}
\{\delta\in\Z^{n+1}: \delta^t = 0 \mbox{ if } t<m, \delta^t >0 \mbox{ o.w., and } \vOnes'\delta=\tau\}.
\end{equation}
Correspondingly, the set of extreme points\footnote{We say that a point in a convex set is an extreme point if it cannot be expressed as a convex combination of two other points from that set.} of this set is given by
\begin{equation}\label{eq:setExt2}
\{e_i\in\Z^{n+1}: e_i^i = \tau-(n-m) \mbox{ and } e_i^j = 1 \mbox{ if } j\geq m, j\neq i\}.
\end{equation}
Note that any point in the set \eqref{eq:setExt1} can be expressed as a convex combination of its extreme points identified in \eqref{eq:setExt2}.

Therefore, we can express any convex combination of $\delta_i$, $i=1,\ldots,\kappa$, by a convex combination of the columns of the matrix\footnote{We suppose that $d$ is odd, i.e., $d_o = (d-1)/2$. The matrix for the cases where $d$ is even can be computed accordingly.} $\Lambda\in\Z^{(n+1)\times\nu}$ defined in \eqref{eq:Lambda}, where\addtocounter{equation}{1}
\begin{equation}\label{eq:nu}
\nu :=\sum_{i=0}^{d_o} (n-d+i+1) + \sum_{j=d_o+1}^{n-k}(n-j+1)
\end{equation}
and $n-k\geq d_o+1$ by the Singleton bound \eqref{eq:Singleton}. In other words, under the relaxation, for any given mixed strategy $\Amixed$ across $\AactionSpace^d/\sim$, there exists a mixed strategy $\beta\in\Delta^{\nu-1}$ over the columns of $\Lambda$ such that we have
\begin{equation}
\begin{bmatrix} \delta_1 & \cdots & \delta_{\kappa} \end{bmatrix} \Amixed = \Lambda \beta,
\end{equation}
which, by \eqref{eq:Phi}, yields that
\begin{equation}
\Amixed'\Phi = \beta' \Lambda' R,
\end{equation}
where $R\in\R^{(n+1)\times(d_o+1)}$, as defined in \eqref{eq:Phi}.

\noindent
{\bf Step-$8$. A Closer Look at the Vector $\reward$:}
Next, we seek to compute the reward $\reward(\Aactiona^i)$. Recall that the reward for $\Aactiona^i$ depends only on $\{\allHamming(\Aactiona^i)\}$, as defined in \eqref{eq:Aoopt2}. However, $\{\allHamming(\Aactiona)\}$ depends only on $\min\allHamming(\Aactiona)$ as shown in Steps $2$-$4$. Therefore,   \eqref{eq:assumption} yields that $\reward(\Aactiona^i)$ depends mainly on the distance to the closest encoded codeword. Based on \eqref{eq:Aoopt2} and \eqref{eq:assumption}, we define an auxiliary vector $s\in\R^{n-k+1}$, where $s_i\in\R$, for $i=0,\ldots,n-k$, is given by
\begin{align}
s_i := \max_{\hbar\in\{0,\ldots,n\}}& \factor{2} - \factor{4}\hbar - \factor{2} \sum_{t=0}^{d_o}\rho(\hbar,t)\label{eq:reward_s}\\
\mathrm{s.t.}&\; \hbar = i \vee \hbar \geq \max\{d-i,i+1\},\nn
\end{align}
where $\vee$ denotes the disjunction operation. This yields that $s_i\in\R$ corresponds to the reward when $\min\allHamming(\Aactiona) = i$.

Note that for all $\Aactiona^i$ that have $\min\allHamming(\Aactiona^i) = m$, the associated reward is $s_m$. Therefore, with the mixed strategy $\beta\in\Delta^{\nu-1}$ introduced in Step-$7$, we have
\begin{equation}\label{eq:S}
r'\Amixed = s'\underbrace{\begin{bmatrix} \vOnes' & & \\ & \ddots & \\ & & \vOnes' \end{bmatrix}}_{=:S}\beta,
\end{equation}
where $S\in\R^{(n-k+1)\times \nu}$, and the dimensions of the vector $\vOnes$ at the $i$th row is $n-d+i$ if $i\leq d_o$, and $n-i$ if $i>d_o$.  

\noindent
{\bf Step-$9$. Transforming \player{D}'s Strategy Space to a Simplex at a Higher Dimensional Space:}
Our goal, here, is to transform \player{D}'s strategy into a mixed strategy at a higher dimensional space in order to be able to transform the problem into an LP as will be explained in detail later in this section. To this end, we can view $\Dmixed\in\DmixedSpace$ as \player{D} selects $d_o+1$ mixed strategies over two element sets, e.g., $\{0,1\}$. This yields that \player{D} selects a mixed strategy over the Cartesian product space of these sets, i.e., $\bigtimes_{i=0}^{d_o}\{0,1\}$, which is
\begin{equation}\label{eq:mu}
\mu = 2^{d_o+1}
\end{equation}
dimensional. For example, for $d_o=1$, the corresponding mixed strategy, denoted by $\sigma\in\Delta^{\mu-1}$, is over $\{[1,0,1,0]',[1,0,0,1]',[0,1,1,0]',[0,1,0,1]'\}$. This yields that there exists a matrix $\Pi\in\R^{(d_o+1)\times\mu}$ such that $\Dmixed = \Pi \sigma$.  As an example, for $d_o=1$, we have
\begin{equation}\label{eq:Pi}
\Dmixed = \underbrace{\begin{bmatrix} 1&1&0&0\\
                                          1&0&1&0
                  \end{bmatrix}}_{=\Pi} \sigma.
\end{equation}

\noindent
{\bf Step-$10$. New Compact Form:}
Eventually, for the relaxed attack strategies, we can write \player{D}'s cost function in the following compact form:
\begin{equation}\label{eq:main}
\beta' \Xi \sigma
\end{equation} 
where $\beta\in\Delta^{\nu-1}$, $\sigma\in\Delta^{\mu-1}$, and
\begin{align}\label{eq:Xi}
\Xi:= -\factor{1}\Lambda'R\Pi +S's\vOnes' + \factor{1}\Lambda'R\vOnes\vOnes' + \factor{3}\vOnes p_o'\Phi_o\Pi,
\end{align}
which follows since we have $\vOnes'\beta=1$ and $\vOnes'\sigma$, which yields, e.g., $\beta'S's = \beta'S's \vOnes'\sigma$.

In the following lemma, we provide an LP to compute the best detection rule.

\begin{lemma}
After the relaxation of \player{A}'s strategy space, the best detection rule $\Dmixed_*\in\DmixedSpace$ is given by 
\begin{equation}
\Dmixed_* = \Pi\sigma_*\mbox{ and } \sigma_* =  \frac{\omega_*}{\vOnes'\omega_*},
\end{equation}
where $\omega_*\in\R^{\mu}$ is the solution of the following LP:
\begin{align}\label{eq:LP}
\max_{\omega \in \R^{\mu}} \vOnes'\omega \mbox{ subject to } \gameMatrix_+\omega \leq \vOnes, \, \omega \geq \vZeros,
\end{align} 
where the positive matrix\footnote{We say that a matrix is positive if its all entries are positive.} $\gameMatrix_+ \in \R_+^{\nu\times\mu}$ is defined by 
\begin{equation}
\gameMatrix_+ := \left\{\begin{array}{ll} \gameMatrix & \mbox{if $\gameMatrix$ is a positive matrix} \\ \gameMatrix + (\epsilon - \gameConstant_o)\vOnes\vOnes' & \mbox{otherwise},\end{array}\right.
\end{equation}
where $\epsilon>0$ and $\gameConstant_o\in\R$ is the minimum entry of $\gameMatrix$.
\end{lemma}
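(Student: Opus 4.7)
The plan is to derive the LP by specializing the standard minimax LP for matrix games to the compact form \eqref{eq:main} and then applying the usual change of variables that converts it to the desired normal form. Since the game is zero-sum with a bilinear cost over two simplices $\Delta^{\nu-1}$ and $\Delta^{\mu-1}$, the Stackelberg value of the relaxed game coincides with the minimax value by von Neumann's minimax theorem, so the equilibrium problem becomes
\begin{equation*}
\min_{\sigma\in\Delta^{\mu-1}}\max_{\beta\in\Delta^{\nu-1}} \beta'\gameMatrix\sigma.
\end{equation*}
First I would perform the inner maximization: since the objective is linear in $\beta$ on the simplex, $\max_\beta \beta'\gameMatrix\sigma$ equals the maximum entry of the vector $\gameMatrix\sigma$, and therefore the problem can be rewritten as $\min v$ subject to $\gameMatrix\sigma \leq v\vOnes$, $\vOnes'\sigma=1$, $\sigma\geq\vZeros$.

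Next I would reduce to the claimed canonical LP via the change of variables $\omega := \sigma/v$, which requires $v>0$. This is where the shift from $\gameMatrix$ to $\gameMatrix_+$ comes in: if $\gameMatrix$ has a nonpositive entry, the value $v$ could be nonpositive and the substitution would be invalid. I would show that replacing $\gameMatrix$ by $\gameMatrix+c\vOnes\vOnes'$ for any constant $c$ only shifts the value by $c$ and preserves the set of optimal strategies, because
\begin{equation*}
\beta'(\gameMatrix + c\vOnes\vOnes')\sigma = \beta'\gameMatrix\sigma + c(\vOnes'\beta)(\vOnes'\sigma)= \beta'\gameMatrix\sigma + c
\end{equation*}
on the product of simplices. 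Choosing $c=\epsilon-\gameConstant_o$ with $\epsilon>0$ and $\gameConstant_o$ the minimum entry of $\gameMatrix$ then guarantees $\gameMatrix_+>0$ entrywise, which in turn forces the value $v>0$ since every row of $\gameMatrix_+\sigma$ is strictly positive for $\sigma\in\Delta^{\mu-1}$.

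Under $\omega=\sigma/v$, the constraint $\gameMatrix_+\sigma\leq v\vOnes$ becomes $\gameMatrix_+\omega\leq\vOnes$; the simplex constraint becomes $\vOnes'\omega=1/v$, $\omega\geq\vZeros$; and minimizing $v$ is equivalent to maximizing $1/v = \vOnes'\omega$. Dropping the equality constraint $\vOnes'\omega=1/v$ (which is implied at the optimum because at the maximum of $\vOnes'\omega$ the bound $\gameMatrix_+\omega\leq\vOnes$ is binding on at least one component, which together with $\omega\geq\vZeros$ pins down the scaling) yields exactly the LP \eqref{eq:LP}. Finally, I would recover $\sigma_*=\omega_*/(\vOnes'\omega_*)$ by inverting the substitution, and apply Step-$9$'s construction $\Dmixed_*=\Pi\sigma_*$ to obtain the randomized detection rule.

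The main obstacle I anticipate is the careful bookkeeping around the shift argument: I need to argue that passing from $\gameMatrix$ to $\gameMatrix_+$ changes only the value (not the argmin over $\sigma$), and that the resulting value is strictly positive so the change of variables is legitimate. The rest is the routine conversion of a matrix-game minimax LP to its canonical normal form, which is well known; the only subtlety specific to this paper is that \player{D}'s effective variable $\sigma$ lives in a lifted simplex $\Delta^{\mu-1}$ defined in Step-$9$, and the final detection rule must be pulled back to $\DmixedSpace$ via the linear map $\Pi$ from \eqref{eq:Pi}.
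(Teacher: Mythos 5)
Your proposal is correct and takes essentially the same route as the paper: the paper's proof invokes the minimax theorem to equate the upper and lower values of the bilinear game $\beta'\gameMatrix\sigma$ over the product of simplices and then cites the routine transformation of a zero-sum matrix game into an LP, sketched as (i) strategic equivalence to a positive game matrix, (ii) rewriting as minimization of the best response, (iii) the simplex-induced constraint, and (iv) a change of variables --- precisely the steps you execute. If anything, your write-up supplies the details (the shift $\gameMatrix + (\epsilon-\gameConstant_o)\vOnes\vOnes'$ preserving optimizers, the positivity of the value, and the substitution $\omega=\sigma/v$) that the paper leaves to the cited reference.
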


\begin{proof}
Note that by definition, we have
\begin{equation}
\min_{\sigma} \max_{\beta} \beta'\gameMatrix \sigma \geq \max_{\beta} \min_{\sigma} \beta' \gameMatrix \sigma.
\end{equation}
We are interested in only the upper value since we seek to compute the Stackelberg equilibrium where \player{A} selects the strategy $\sigma$ by knowing \player{D}'s strategy $\beta$. However, since the objective functions are linear in the optimization arguments while the constraint sets are convex, decoupled, and compact, the minimax theorem \cite{ref:Basar99} shows that
\begin{equation}
\min_{\sigma} \max_{\beta} \beta' \gameMatrix \sigma = 
\max_{\beta} \min_{\sigma} \beta' \gameMatrix \sigma,
\end{equation}
which implies that the upper and lower values of the game are equal and that we have a saddle-point equilibrium in \eqref{eq:main}. Therefore, we can apply rather routine transformation of mixed-strategy equilibrium of zero-sum matrix games into an LP \cite{ref:Basar99} in order to compute the best detection rule. 

A sketch of the routine transformation of \eqref{eq:main} into an LP \cite{ref:Basar99} is as follows: $i)$ we show that the game \eqref{eq:main} is strategically equivalent to a game where the game matrix is a positive matrix; $ii)$ we can write \eqref{eq:main} as the minimization of \player{A}'s best response; $iii)$ we can obtain a certain necessary condition on $\sigma$ in terms of \player{A}'s best response since $\Delta^{\nu-1}$ is a simplex; $iv)$ through a change of variable, we can obtain the equivalent LP \eqref{eq:LP}.
\end{proof}

\begin{corollary}
The solution for the dual problem of \eqref{eq:LP}, i.e.,
\begin{align}\label{eq:LP2}
\min_{\vartheta \in \R^{\nu}} \vOnes'\vartheta \mbox{ subject to } \gameMatrix_+'\vartheta \leq \vOnes, \, \omega \geq \vZeros,
\end{align} 
yields that
\begin{equation}
\beta_* = \frac{\vartheta_*}{\vOnes'\vartheta_*}.
\end{equation}
\end{corollary}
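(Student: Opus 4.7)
The plan is to invoke strong LP duality together with the standard LP-to-zero-sum-game dictionary already used in the proof of the preceding lemma. That proof established that the original game \eqref{eq:main} is strategically equivalent to the strictly-positive-matrix game with payoff $\beta'\gameMatrix_+\sigma$, and that solving the primal LP \eqref{eq:LP} recovers \player{D}'s saddle-point strategy via $\sigma_{*} = \omega_{*}/(\vOnes'\omega_{*})$ with $\vOnes'\omega_{*} = 1/v_+$, where $v_+ > 0$ denotes the value of the positive game. My claim is that running the same dictionary on the dual program recovers \player{A}'s saddle-point strategy, which is exactly the content of the corollary.

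First I would verify, by writing the Lagrangian explicitly, that the LP dual of \eqref{eq:LP} is indeed the program in \eqref{eq:LP2}. Since $\gameMatrix_+$ has strictly positive entries, the primal is feasible (take $\omega = \vZeros$) and its objective is bounded above (every coordinate of a feasible $\omega$ is at most $1/\min_{i,j}(\gameMatrix_+)_{ij}$), so strong duality applies and yields $\vOnes'\omega_{*} = \vOnes'\vartheta_{*} = 1/v_+$. This pins down the normalizing constant.

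Next I would carry out the symmetric change of variables $\beta := \vartheta/(\vOnes'\vartheta)$ and $v := 1/(\vOnes'\vartheta)$ inside \eqref{eq:LP2}. Non-negativity of $\beta$ is immediate from $\vartheta \geq \vZeros$, and $\vOnes'\beta = 1$ by construction, so $\beta \in \Delta^{\nu-1}$. The dual constraint $\gameMatrix_+'\vartheta \geq \vOnes$ is then equivalent, component-wise, to $\beta'\gameMatrix_+ \geq v\vOnes'$, while minimizing $\vOnes'\vartheta$ is equivalent to maximizing $v$. This is precisely the standard LP formulation of $\max_{\beta \in \Delta^{\nu-1}} \min_{\sigma \in \Delta^{\mu-1}} \beta'\gameMatrix_+\sigma$, whose maximizer is \player{A}'s saddle-point strategy in the positive game. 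Finally, strategic equivalence between $\gameMatrix$ and $\gameMatrix_+$, which differ only by the constant-shift $(\epsilon - \gameConstant_o)\vOnes\vOnes'$ that adds the same number to $\beta'\gameMatrix\sigma$ for every $\beta \in \Delta^{\nu-1}$ and $\sigma \in \Delta^{\mu-1}$, transports $\beta_{*}$ back to the original game \eqref{eq:main}, giving the stated formula.

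The only genuinely delicate point, and thus the main obstacle, is making sure the change of variable is well-posed, i.e., that $\vOnes'\vartheta_{*} > 0$ so the normalization is legitimate and $\beta_{*}$ is a bona fide probability vector. This reduces to $v_+ > 0$, which holds because $\gameMatrix_+$ is strictly positive: for any $\sigma \in \Delta^{\mu-1}$ the vector $\gameMatrix_+\sigma$ is componentwise strictly positive, so every $\beta \in \Delta^{\nu-1}$ yields $\beta'\gameMatrix_+\sigma > 0$, and hence the game value is strictly positive. Once this is established, the identification $\beta_{*} = \vartheta_{*}/(\vOnes'\vartheta_{*})$ follows directly, mirroring the primal-side argument given in the previous lemma.
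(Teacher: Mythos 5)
Your proposal is correct and follows essentially the same route the paper intends: the corollary is stated without proof precisely because it is the routine dual-side counterpart of the lemma's LP transformation of the positive matrix game \cite{ref:Basar99}, and your steps (verifying the dual, the change of variables $\beta = \vartheta/(\vOnes'\vartheta)$, strong duality via feasibility and boundedness, positivity of the game value, and strategic equivalence under the constant shift $(\epsilon-\gameConstant_o)\vOnes\vOnes'$) fill in exactly those standard details. Note that you have implicitly (and rightly) corrected two typos in the paper's statement of \eqref{eq:LP2}, whose constraints should read $\gameMatrix_+'\vartheta \geq \vOnes$ and $\vartheta \geq \vZeros$ rather than $\gameMatrix_+'\vartheta \leq \vOnes$ and $\omega \geq \vZeros$.
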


In the following section, we analyze the performance of the proposed detection mechanism numerically for various scenarios.

\section{Numerical Examples} \label{sec:examples}

The proposed framework can be applied to any linear block code since the analytical results are based only on the abstraction of the code, i.e., $[n,k,d]_q$. Therefore, in order to compute the detection rule, specific to the underlying encoding-decoding scheme, we need the configuration of the code, i.e., $[n,k,d]_q$, and the matrix $R\in\R^{(n+1)\times(d_o+1)}$, as defined in \eqref{eq:Phi}. At run-time, the detection mechanism triggers an alert based only on the number of mismatched symbols. 

As numerical examples, in this section, we examine the performance of the proposed detection mechanisms for the smart codes that are constructed via Reed-Solomon coding \cite{ref:Reed60}. Particularly, Reed-Solomon code (RS-Code) is a maximum distance separable code that maximizes the minimum distance between any two distinct codewords within the general class of linear block codes, and it has widely used applications, e.g., QR codes. The minimum distance of RS-Code $[n,k,d]_q$ is given by
\begin{equation}
d = n-k+1,
\end{equation} 
which is the Singleton bound \eqref{eq:Singleton} for the linear block codes. In practical implementations, the alphabet size is in general selected a prime power and length of codeword is set $n<q$, e.g., often $n=q-1$.

\begin{table}[t!]
\caption{Configuration of the RS-Codes}
\renewcommand{\arraystretch}{1.5}
\begin{center}
\begin{tabular}{r|c|c|c|c|c}
RS-Code				& $\#$ distinct road signs & $\#$ bits & $\nu$ & $\mu$ & $d_o$\\
\hline
$[7,5,3]_8$ 		& $8^5=32768$ 	& $21$ & $17$ &$4$	&$1$\\
$[7,3,5]_8$ 		& $8^3=512$ 		& $21$ & $21$ &$8$	&$2$\\
$[11,5,7]_{16}$ 		& $16^5=1,048,576$	& $44$ & $47$ &$16$	&$3$\\
$[11,3,9]_{16}$ 		& $16^3=4096$		& $44$ & $47$ &$32$	&$4$\\
$[15,5,11]_{16}$ 	& $16^5=1,048,576$	& $60$ & $85$ &$64$	&$5$\\ 
$[15,3,13]_{16}$ 	& $16^3=4096$		& $60$ & $81$ &$128$	&$6$ 
\end{tabular}
\end{center}
\label{tab:codes}
\end{table}

As illustrative examples, we examine the performance of the proposed detection mechanism for the RS-Codes: $[7,5,3]_8$, $[7,3,5]_8$, $[11,5,7]_{16}$, $[11,3,9]_{16}$, $[15,5,11]_{16}$, and $[15,3,11]_{16}$ such that the corresponding distances for the decodable regions are given by $d_o = 1,2,3,4,5,6$, respectively. For each RS-Code, Table \ref{tab:codes} tabulates the maximum number of distinct road signs that can be encoded, the number of bits in the codeword, i.e., $n\times\log_2 q$ (which can give an idea about the size of the associated smart code), dimensions of the mixed strategies $\beta\in\Delta^{\nu-1}$ and $\sigma\in\Delta^{\mu-1}$, and the decodable distance $d_o\in\Z$. The number of distinct road signs that a code can express is not directly related to the decodable distance. For example, $[11,5,7]_{16}$ can encode as much as around $1$ million distinct road signs, but its decodable distance is $3$, which is less than the decodable distance of $[15,3,13]_{16}$, which can encode as much as $4096$ distinct road signs.

\begin{table}[t!]
\caption{Probability of decoding error/failure for the RS-Codes over Different Channels. Highlighted cells correspond to the error/failure probabilities less than $0.02$.}
\renewcommand{\arraystretch}{1.5}
\begin{center}
\begin{tabular}{r|c|c|c|c}
RS-Code & $\errorProb=0.01$ & $\errorProb=0.05$ & $\errorProb=0.1$ & $\errorProb=0.2$\\
\hline
$[7,5,3]_8$ 		& \cc $0.0020$ & $0.0444$ & $0.1497$ & $0.4233$\\
$[7,3,5]_8$ 		& \cc $0.0000$ & \cc $0.0038$ & $0.0257$ & $0.1480$\\
$[11,5,7]_{16}$ 		& \cc $0.0000$ & \cc $0.0016$ & \cc $0.0185$ & $0.1611$\\
$[11,3,9]_{16}$ 		& \cc $0.0000$ & \cc $0.0001$ & \cc $0.0028$ & $0.0504$\\
$[15,5,11]_{16}$ 	& \cc $0.0000$ & \cc $0.0001$ & \cc $0.0022$ & $0.0611$\\ 
$[15,3,13]_{16}$ 	& \cc $0.0000$ & \cc $0.0000$ & \cc $0.0003$ & \cc $0.0181$ 
\end{tabular}
\end{center}
\label{tab:errorRate}
\end{table}

In order to examine the performance across a range of relative small to high noise channels, we consider $4$ different channels with probabilities of symbol errors: $\errorProb = 0.01,0.05,0.1,0.2$. For each channel, in Table \ref{tab:errorRate}, we tabulate the probability of decoding error/failure for the RS-Codes. We highlight the error/failure probabilities that are less than $0.02$. Note that a high error rate yields that the associated code is not reliable even when there is no adversarial intervention. Correspondingly, if a code leads to a higher error/failure probability, then we can prefer codes that include more redundancy to improve reliability. 

Next, we compare the reliability of the smart road signs with respect to the cost metric \eqref{eq:cost} for the cases with and without the proposed detection mechanism.
For example, we set all the road signs equally likely and we use the Monte Carlo method to compute $R\in\R^{(n+1)\times(d_o+1)}$ over $10^6$ independent trials. We set the multiplicative factors as
\begin{equation}
\factor{1}=\factor{2}=\factor{4} = 100 \mbox{ and }\factor{3} = \tau,
\end{equation}
where we set the weight of the objective $O3)$, i.e., false alarm cost, in order to keep the probability of false alarm at a certain range, e.g., less than $10\%$, for the scenarios where the code has error/failure probability less than $0.02$. In order to solve the LPs numerically, we use CVX, a package for specifying and solving convex programs \cite{ref:Grant08,ref:CVX}.

\begin{table}[t!]
\caption{The conservative cost if there were no detection mechanism.}
\renewcommand{\arraystretch}{1.5}
\begin{center}
\begin{tabular}{r|r|r|r|r}
RS-Code & $\errorProb=0.01$ & $\errorProb=0.05$ & $\errorProb=0.1$ & $\errorProb=0.2$\\
\hline
$[7,5,3]_8$ 		& \cc $9,000$ & $36,251$ & $55,564$ & $61,491$\\
$[7,3,5]_8$ 		& \cc $112$ & \cc $778$ & $1,306$ & $1,683$\\
$[11,5,7]_{16}$ 		& \cc $257,750$ & \cc $980,210$ & \cc $1,331,300$ & $1,179,500$\\
$[11,3,9]_{16}$ 		& \cc $913$ & \cc $4,750$ & \cc $6,836$ & $6,901$\\
$[15,5,11]_{16}$ 	& \cc $379,300$ & \cc $1,313,600$ & \cc $1,628,300$ & $1,152,200$\\ 
$[15,3,13]_{16}$ 	& \cc $1,280$ & \cc $5,774$ & \cc $7,718$ & \cc $6,108$ 
\end{tabular}
\end{center}
\label{tab:no_detector}
\end{table}

\begin{table}[t!]
\caption{The conservative cost if there were the proposed detection mechanism.}
\renewcommand{\arraystretch}{1.5}
\begin{center}
\begin{tabular}{r|c|c|c|c}
RS-Code & $\errorProb=0.01$ & $\errorProb=0.05$ & $\errorProb=0.1$ & $\errorProb=0.2$\\
\hline
$[7,5,3]_8$ 		& \cc $2,246$ & $9,579$ & $16,167$ & $22,847$\\
$[7,3,5]_8$ 		& \cc $100$ & \cc $112$ & $145$ & $214$\\
$[11,5,7]_{16}$ 		& \cc $279$ & \cc $19,394$ & \cc $88,599$ & $261,270$\\
$[11,3,9]_{16}$ 		& \cc $100$ & \cc $105$ & \cc $160$ & $530$\\
$[15,5,11]_{16}$ 	& \cc $100$ & \cc $5,549$ & \cc $39,194$ & $145,690$\\ 
$[15,3,13]_{16}$ 	& \cc $100$ & \cc $100$ & \cc $107$ & \cc $262$ 
\end{tabular}
\end{center}
\label{tab:with_detector}
\end{table}

\begin{table}[t!]
\caption{Probability of False Alarms}
\renewcommand{\arraystretch}{1.5}
\begin{center}
\begin{tabular}{r|c|c|c|c}
RS-Code & $\errorProb=0.01$ & $\errorProb=0.05$ & $\errorProb=0.1$ & $\errorProb=0.2$\\
\hline
$[7,5,3]_8$ 		& \cc $0.0657$ & $0.2884$ & $0.4822$ & $0.6724$\\
$[7,3,5]_8$ 		& \cc $0.0001$ & \cc $0.0296$ & $0.1082$ & $0.2711$\\
$[11,5,7]_{16}$ 		& \cc $0.0002$ & \cc $0.0135$ & \cc $0.0708$ & $0.2259$\\
$[11,3,9]_{16}$ 		& \cc $0.0000$ & \cc $0.0013$ & \cc $0.0150$ & $0.1077$\\
$[15,5,11]_{16}$ 	& \cc $0.0000$ & \cc $0.0052$ & \cc $0.0106$ & $0.1031$\\ 
$[15,3,13]_{16}$ 	& \cc $0.0000$ & \cc $0.0000$ & \cc $0.0018$ & \cc $0.0406$ 
\end{tabular}
\end{center}
\label{tab:false_alarm}
\end{table}

In order to compute the (conservative) cost for the scenarios where there is no detection mechanism, we compute
\begin{equation}
\max_{\beta\in\Delta^{\nu-1}} \beta' \Xi \sigma_o,
\end{equation}
where
\begin{equation}
\sigma_o := \begin{bmatrix} 0 & \ldots & 0 & 1 \end{bmatrix}'.
\end{equation}
Particularly, since the right most column of $\Pi\in\R^{(do+1)\times \mu}$ is a zero vector, as exemplified in \eqref{eq:Pi}, $\sigma_o\in\Delta^{\mu-1}$ yields that $\Dmixed = \Pi\sigma_o = \pmb{0}$. In Table \ref{tab:no_detector}, we tabulate the (conservative) cost of the codes over the channels examined if there were no detection mechanism. Note that we highlight the cells corresponding to the error/failure probabilities less than $0.02$ in order to distinguish the scenarios where the associated RS-Code can be used reliably. In Table \ref{tab:with_detector}, we tabulate the conservative cost of the codes if there were the proposed detection mechanism. The corresponding false alarm rates are provided in Table \ref{tab:false_alarm}. A comparison of Tables \ref{tab:no_detector} and  \ref{tab:with_detector} shows a substantial decrease in the conservative cost at the expense of a false alarm rate less than $10\%$ in the scenarios where the error/failure probability is less than $0.02$. 

\begin{remark}
We propose a way to relax certain constraints on the attack space to mitigate the scalability issue. It can be possible to obtain tighter approximations by considering tighter necessary conditions on \player{A}'s actions; however, this would also increase computational complexity.
\end{remark}

\begin{figure}[t!]
\centering
\includegraphics[width=.48\textwidth]{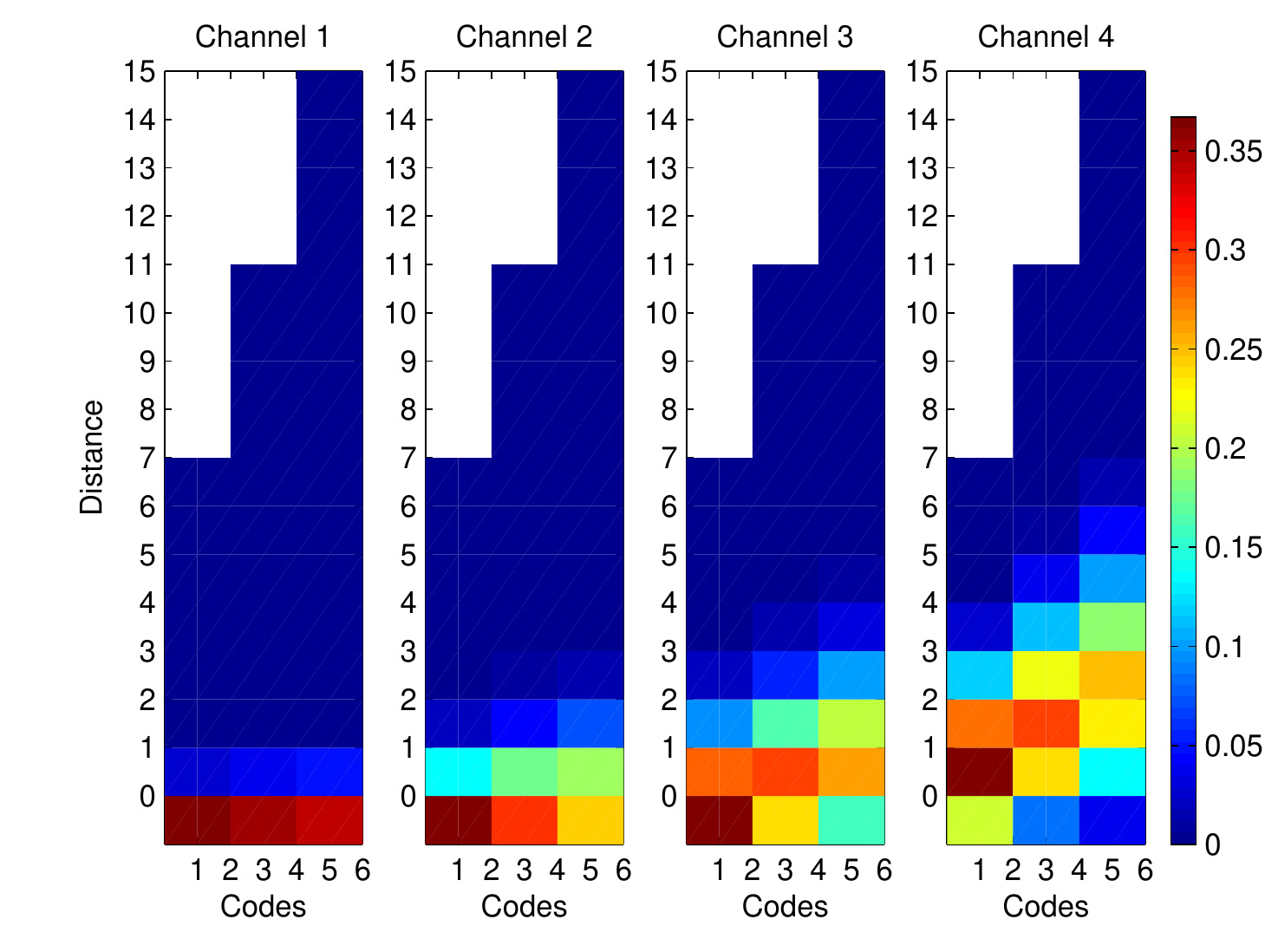}
\caption{The probability of the number of symbol errors for the RS-Codes enumerated in the order of Table \ref{tab:codes} across the channels enumerated with respect to the probability of symbol errors $\errorProb=0.01,0.05,0.1,0.2$ in that order.}
\label{fig:noise}
\end{figure}

\begin{figure}[t!]
\centering
\includegraphics[width=.48\textwidth]{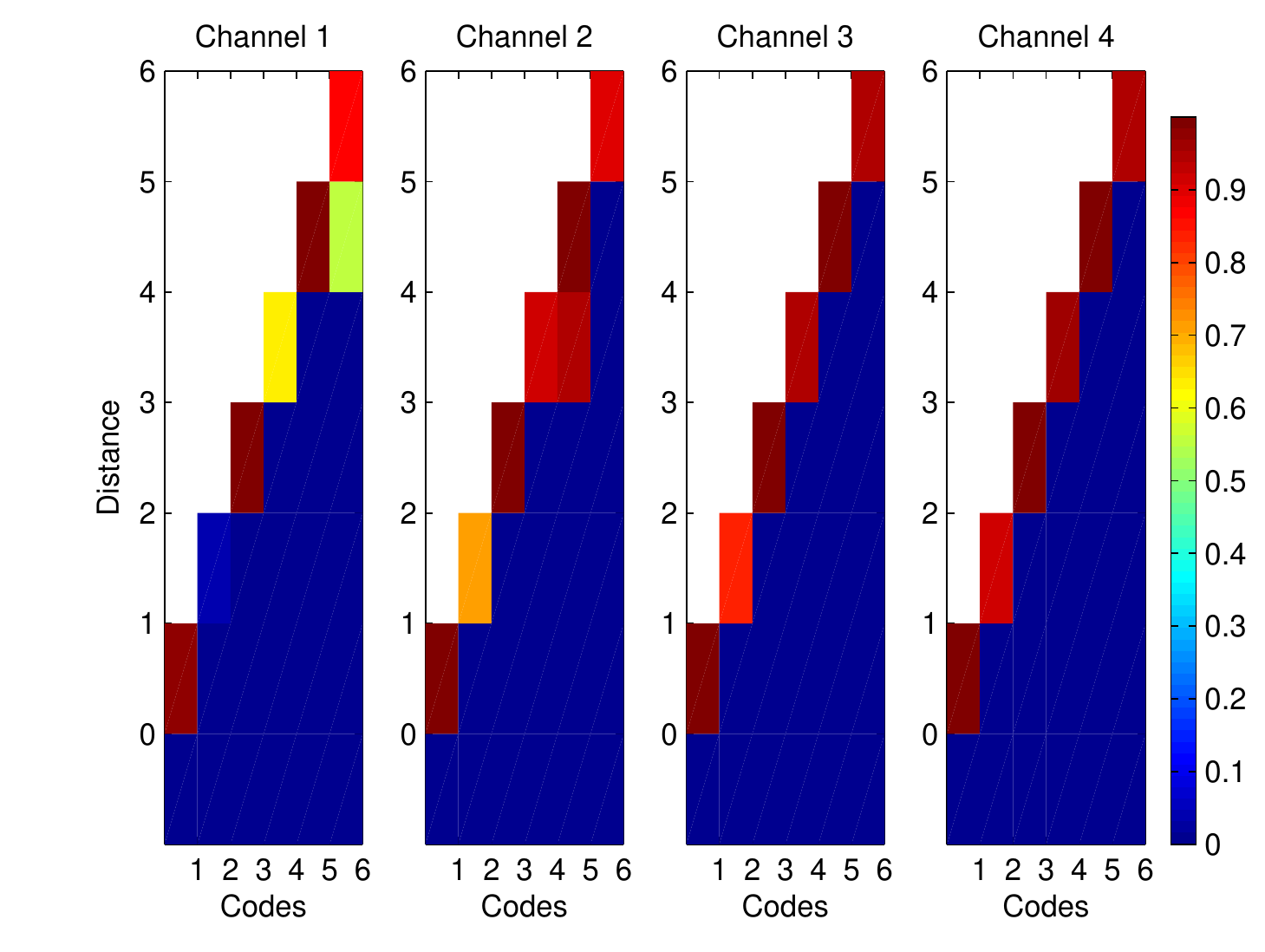}
\caption{The detection rule $\Dmixed_*\in\DmixedSpace$ for the RS-Codes enumerated across the channels enumerated.}
\label{fig:detectionRule}
\end{figure}

\begin{figure}[t!]
\centering
\includegraphics[width=.48\textwidth]{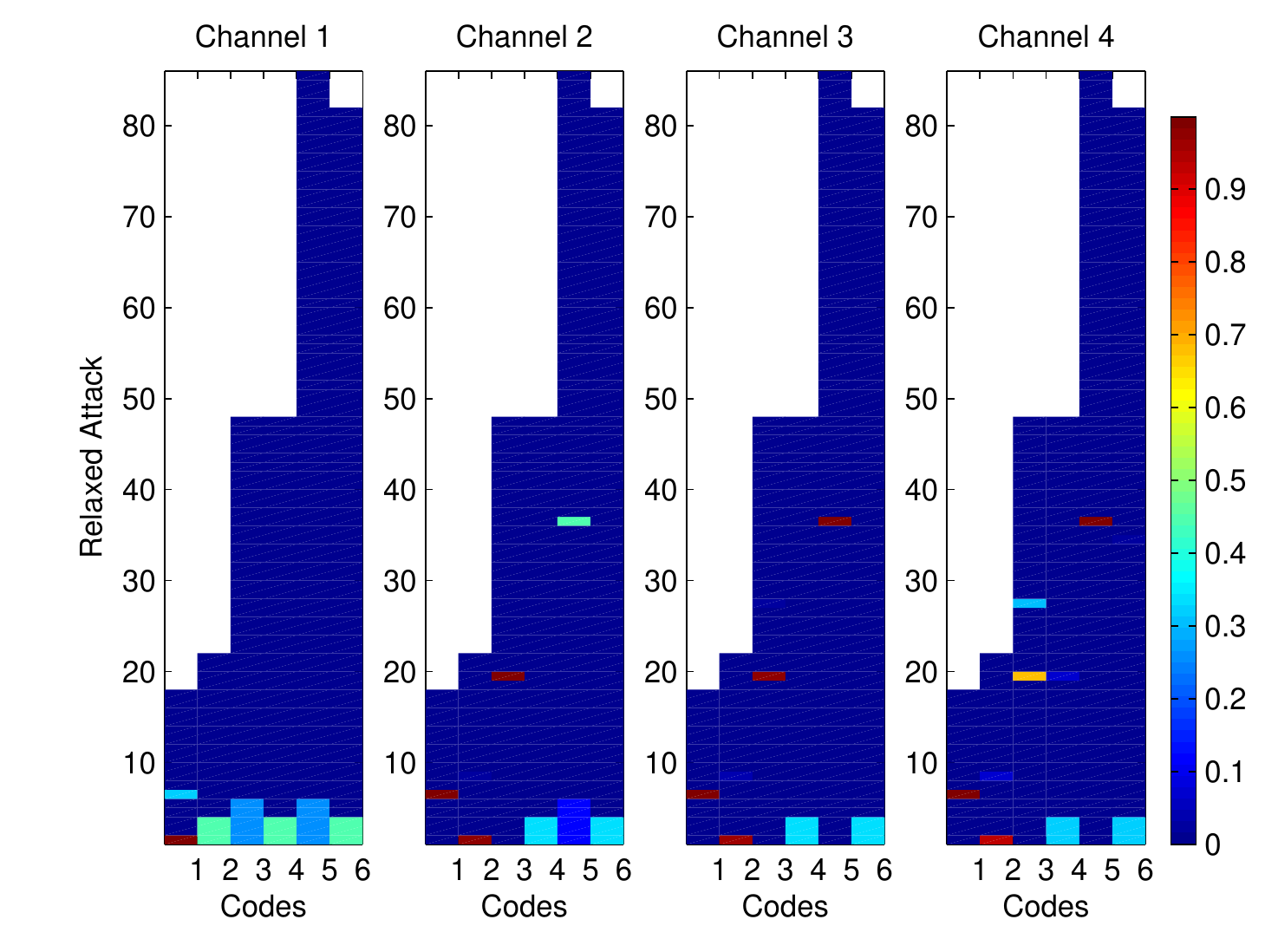}
\caption{The relaxed attacker strategy $\beta_*\in\Delta^{\nu-1}$, which is a mixed strategy over the columns of $\Lambda\in\R^{(n+1)\times\nu}$, as defined in \eqref{eq:Lambda}.}
\label{fig:relaxedAttack}
\end{figure}

Furthermore, in Figs \ref{fig:noise}, \ref{fig:detectionRule}, and \ref{fig:relaxedAttack}, we provide the probability of the number of symbol errors due to channels, the proposed detection rule, and the relax attack strategy, respectively. We observe that \player{D} triggers alerts if the error rate is relatively high in general, which turns out to restrain the (powerful) attacker to put more weight on the left most columns of $\Lambda\in\R^{(n+1)\times\nu}$, as defined in \eqref{eq:Lambda}, in his/her (relaxed) attack strategy. Particularly, at the equilibrium, the powerful attacker ends up crafting the smart code relatively more aggressively similar to the choice $C3)$ as discussed in Subsection \ref{sec:equivalence}. Depending on the channel and the configuration of the code, the optimal detection rules can vary. Through the proposed mechanism, based on a game theoretical analysis, we can compute the best detection rule efficiently and systematically even at scales of around $1$ million distinct road signs using an average personal computer without difficulty.

\section{Conclusion} \label{sec:conclusion}

A future trend in intelligent transportation systems is smart road signs equipped with smart codes. In addition to incorporating relatively larger amount of information, smart codes constructed via error-correction methods can provide robustness against small scale perturbations. We have introduced a game theoretical adversarial intervention detection mechanism for reliable smart road signs against threats that can perturb the smart codes at small or large scales intelligently. While designing the detection mechanism, we have considered multiple performance metrics regarding the cost associated with losing the opportunity of preventing future attacks by not being able to detect the attack, the cost associated with adversary-induced decoding error or failure, the false alarm cost, and the ease of a deceptive perturbation. We have designed the detection rule against the worst-case attacker who maximizes the cost metrics by knowing the designed defense, i.e., under the solution concept of Stackelberg equilibrium where the defender is the leader. We have provided a relaxation on the attacker's strategy space in order to mitigate possible computational issues that might arise while computing the equilibrium when there is a large number of distinct road signs. This has enabled us transform the problem into an LP with considerably small computational complexity. Finally, we have examined the performance numerically over various scenarios. 

The proposed game theoretical framework brings in new research directions for the applications of smart road signs in intelligent transportation systems. In the following, we identify some of these future research directions:
\begin{itemize}
\item We emphasize that sensor fusion where we collect information through several separate sources can lead to more resilient and robust systems \cite{ref:Klein04}. In the future, smart road signs combined with state-of-the-art vision-based road-sign recognition algorithms can provide both reliable and effective recognition by smart vehicles. 
\item A network of smart vehicles can lead to more reliable traffic networks. Particularly, a detection mechanism faces a trade-off between detecting an adversarial intervention and avoiding false alarms. Since a road sign would be encountered by multiple smart vehicles, those vehicles can share the false alarm cost against an attack on the road sign. Similar to herd immunity \cite{ref:Fine11}, a herd of smart vehicles can achieve more reliable road sign recognition. 
\item Additionally, this approach can also be a good fit for other classification problems that can be viewed as a signaling problem, where we can incorporate visual smart codes while transmitting information. For example, computer vision for (warehouse) inventory management \cite{ref:Katircioglu15} or intelligent robotic sorting \cite{ref:Guerin18} would constitute other interesting applications for the framework developed here. 
\end{itemize}

\bibliographystyle{IEEEtran}
\bibliography{ref}

\begin{IEEEbiographynophoto}{Muhammed O. Sayin} received the B.S. and M.S. degrees in electrical and electronics engineering from Bilkent University, Ankara, Turkey, in 2013 and 2015, respectively. He is currently pursuing the Ph.D. degree in electrical and computer engineering from the University of Illinois at Urbana-Champaign (UIUC). His current research interests include dynamic games and decision theory, security, stochastic control, and cyber-physical systems.
\end{IEEEbiographynophoto}

\begin{IEEEbiographynophoto}{Chung-Wei Lin} received the B.S. degree in computer science and information engineering and the M.S. degree in electronics engineering from the National Taiwan University, Taipei, Taiwan. He received the Ph.D. degree in electrical engineering and computer sciences from the University of California, Berkeley, Berkeley, CA, USA. He is an Assistant Professor at the Department of Computer Science and Information Engineering, National National Taiwan University, Taipei, Taiwan. His research includes design, analysis, security, and certification of automotive systems.
\end{IEEEbiographynophoto}

\begin{IEEEbiographynophoto}{Eunsuk Kang} received a Ph.D. degree in computer science from MIT, and a B.S.E. degree from the University of Waterloo in Canada. He is an Assistant Professor in the Institute for Software Research, School of Computer Science at Carnegie Mellon University. His research interests are in software engineering, formal methods, security, and system safety.
\end{IEEEbiographynophoto}

\begin{IEEEbiographynophoto}{Shinichi Shiraishi} (M'00) received the B.S., M.S., and Ph.D. degrees in electronics engineering from Hokkaido University, Sapporo, Japan, in 1997, 1999, and 2002, respectively. He is currently a Group Leader with Toyota InfoTechnology Center, Co., Ltd., Minato-ku, Tokyo, Japan. His research interests include software assurance, software architecture, modeling languages, and design analysis. 
\end{IEEEbiographynophoto}

\begin{IEEEbiographynophoto}{Tamer Ba\c{s}ar} (S'71-M'73-SM'79-F'83-LF'13) is with the University of Illinois at Urbana-Champaign, where he holds the academic positions of  Swanlund Endowed Chair; Center for Advanced Study Professor of  Electrical and Computer Engineering; Research Professor at the Coordinated Science Laboratory; and Research Professor  at the Information Trust Institute. He is also the Director of the Center for Advanced Study.

He received B.S.E.E. from Robert College, Istanbul, and M.S., M.Phil, and Ph.D. from Yale University. He is a member of the US National Academy
of Engineering, member of the  European Academy of Sciences, and Fellow of IEEE, IFAC (International Federation of Automatic Control) and SIAM (Society for Industrial and Applied Mathematics), and has served as president of IEEE CSS (Control Systems  Society), ISDG (International Society of Dynamic Games), and AACC (American Automatic Control Council). He has received several awards and recognitions over the years, including the highest awards of IEEE CSS, IFAC, AACC, and ISDG, the IEEE Control Systems Award, and a number of international honorary doctorates and professorships. He has over 900 publications in systems, control, communications, and dynamic games, including books on non-cooperative dynamic game theory, robust control, network security, wireless and communication networks, and stochastic networked control. He was the Editor-in-Chief of Automatica between 2004 and 2014, and is currently  editor of several book series. His current research interests include stochastic teams, games, and networks; distributed algorithms; security; and cyber-physical systems.
\end{IEEEbiographynophoto}

\vfill

\end{document}